\theoremstyle{plain}
\newtheorem{theorem}{Theorem}[section]
\newtheorem{lemma}[theorem]{Lemma}
\theoremstyle{definition}
\newtheorem{assumption}[theorem]{Assumption}
\crefname{assumption}{Assumption}{Assumptions}
\Crefname{assumption}{Assumption}{Assumptions}
\theoremstyle{remark}
\newtheorem*{remark}{Remark}
\newenvironment{pf}{{\noindent\it Proof}\ }{\hfill $\square$\par}
\title{Spend Wisely: Maximizing Post-Training Gains in Iterative Synthetic Data Bootstrapping}
\author{
    Pu Yang\thanks{Equal contribution.}\\
    Peking University\\
    \And
    Yunzhen Feng\footnotemark[1]\\
    New York University\\
    \And
    Ziyuan Chen\footnotemark[1]\\
    Peking University\\
    \AND
    Yuhang Wu\\
    UC Berkeley\\
    \And
    Zhuoyuan Li\thanks{Corresponding Author. \texttt{zy.li@nus.edu.sg}}\\
    National University of Singapore
}
\begin{document}

\maketitle

\begin{abstract}
Modern foundation models often undergo iterative ``bootstrapping'' in their post-training phase: a model generates synthetic data, an external verifier filters out low-quality samples, and the high-quality subset is used for further fine-tuning. Over multiple iterations, the model performance improves, raising a crucial question: How should the total budget for generation and training be allocated across iterations to maximize final performance? 
In this work, we develop a theoretical framework for analyzing budget allocation strategies. Specifically, we show that constant policies fail to converge with high probability, while increasing policies---particularly exponential growth policies---exhibit significant theoretical advantages. 
Experiments on image denoising with diffusion probabilistic models and math reasoning with large language models show that both exponential and polynomial growth policies consistently outperform constant policies, with exponential policies often providing more stable performance.
\end{abstract}


\section{Introduction}
\label{sec:intro}


Supervised fine-tuning is a critical stage in training large language models (LLM). Pre-training supplies them with broad linguistic and factual knowledge, but task-specific skills---such as tool use \citep{schick2023toolformer}, reasoning pattern \citep{gandhi2025cognitive}, and agentic behavior \citep{shao2023character}---emerge when the model is further refined on carefully curated, supervised examples. Securing these high-quality human-annotated data, however, remains a major bottleneck, as it requires domain expertise and substantial resources.



To address this limitation, synthetic data have emerged as a promising alternative that offers scalability and cost-effectiveness. Despite concerns about potential risks of model collapse \citep{Shumailov2024Nature, dohmatob2024tale}, synthetic data is typically selected and verified before use in post-training \citep{feng2024beyond, setlur2024rl}, ensuring its quality. A common paradigm for leveraging synthetic data employs an iterative bootstrapping process \citep{zelikman2022star, trung2024reft}: the model generates synthetic data, rewards or verifiers are used to filter and select high-quality data, and the model is then fine-tuned on selected data. This process is repeated iteratively to fully improve performance. An illustration of this approach is provided in \cref{fig:iterative}. 

However, for practitioners who implement this approach, an important question arises: How should a fixed computational budget be allocated to decide the amount of synthetic data to generate and select at each iteration to maximize performance?

\begin{figure}
    \centering
    \includegraphics[width=0.9\linewidth]{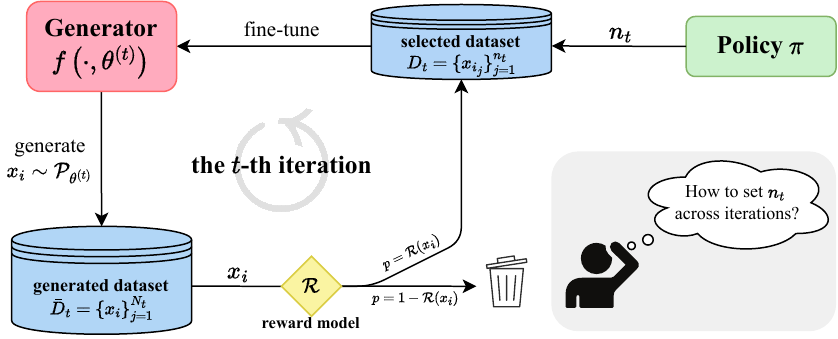}
    \caption{\textbf{Iterative learning with synthetic data.} In this framework, synthetic data is generated, filtered using a reward model, and the selected data is used to further train the generator. The budget policy is defined as the quantity of data retained after selection, $n_t$. Our goal is to identify the optimal policy across iterations to achieve the best final performance, given a fixed budget.}
    \label{fig:iterative}
\end{figure}

In this paper, we establish some foundational principles for crafting optimal strategies for synthetic data generation across iterations. To the best of our knowledge, this is the first attempt to address this problem. We begin with a theoretical analysis of policies that control the amount of selected synthetic data used in each iteration, as illustrated in \cref{fig:iterative}. In a simplified setting with Gaussian data and exponential reward functions, we could identify the optimal policy. Contrary to common strategies that maintain a fixed number of synthetic data across iterations (constant policies), our analysis shows that the optimal strategy requires exponentially increasing the amount of data at each iteration.

Furthermore, in more general settings with only mild assumptions about regularity, we demonstrate that constant policies fail to converge with high probability. In contrast, policies that increase the amount of synthetic data across iterations can converge to optimal performance. Among such increasing policies, we prove that exponential growth policies guarantee exponential convergence, and in the worst-case scenarios, they achieve no larger computational costs compared to polynomial growth policies when attaining similar near-optimal performance.

Building on these theoretical insights, we validate our findings with two experiments: an image-denoising task using diffusion probabilistic models (DPMs), and a math-reasoning task with large language models (LLMs). Across these experiments, exponential and linear (polynomial) growth policies outperform constant policies. Moreover, the best exponential policies match or exceed the performance of linear policies in every case, confirming their theoretical robustness.

We summarize our contribution as follows.
\begin{itemize}
    \item We formulate the problem of optimizing post-training gains under a limited budget for iterative learning in \cref{subsec:notation}.
    \item In a solvable Gaussian setting, we prove the optimality of an exponential growth policy in \cref{subsec:intuit} and validate it through numerical simulations.
    \item In a more general setting, we demonstrate in \cref{sec:theory} that constant policies fail to converge to the optimal reward with high probability, while increasing policies ensure convergence. Among these, the exponential growth policy achieves an exponential convergence rate and outperforms polynomial growth policies in the worst case.
    \item Experiments on image denoising (diffusion probabilistic model) and math reasoning (large language model) in \cref{sec:exp} confirm that exponential and polynomial growth policies outperform constant policies, with exponential policies exhibiting greater stability.
\end{itemize}


\section{Related Work}
\label{sec:relat}


\textbf{Synthetic Data and Iterative Bootstrapping.} The iterative bootstrapping framework, illustrated in \cref{fig:iterative}, is widely used for training foundational models in both text and vision domains. In this approach, synthetic data are generated and then iteratively refined or filtered to produce new training sets that more closely align with the target objectives. For large language models (LLMs) and vision language models (VLMs), this approach has been applied to instruction following \citep{touvron2023llama} and alignment tasks \citep{dong2023raft}, and vision question answering \citep{yang2025mocoll}, wherein reward models filter synthetic data to retain only high-quality samples. Recently, it has been particularly effective in improving reasoning abilities, as first demonstrated in \citet{zelikman2022star}. Subsequent works \citep{lehnert2024beyond, su2024dualformer, trung2024reft, guan2025rstar, singhbeyond} have extended this approach to generate diverse reasoning paths with search algorithms. Beyond text, iterative bootstrapping has also been utilized to generate training data for image segmentation \citep{kirillov2023segment}, create image-text pairs \citep{fan2023improving, fan2024scaling}, and synthesize images to address distribution shifts \citep{hemmat2023feedback, azizi2023synthetic}.  However, in these empirical works, a principled method for controlling generation in each iteration is still missing.


To our knowledge, \citet{ferbach2024self} provides the only theoretical analysis of this paradigm, proving convergence to the optimal policy under the infinite-sample assumption. Crucially, their framework fails to characterize finite sample requirements, offering little actionable guidance to practitioners implementing this framework. We address this gap by establishing optimal computational budget allocation strategies across iterations to achieve maximum performance gains.

\textbf{Model Collapse}. A series of papers has shown that the incorporation of synthetic data into the training corpus can lead to performance degradation, either in a single training instance or iteratively over time \citep{shumailov2023curse, alemohammad2023selfconsuming, dohmatob2024tale, dohmatob2024model, dohmatob2024strong}. \citet{feng2024beyond} has explored the use of data selection to overcome model collapse by employing synthetic data in a single generation, combined with verification. In our work, we also leverage verification, but focus on addressing the question of how to optimally generate synthetic data during learning.

\section{Problem Formulation and Preliminary Analysis}
\label{sec:thm}


In this section, we formalize the iterative learning setting in \cref{subsec:notation} and then analyze optimal budget allocation strategies with a simple Gaussian case in \cref{subsec:intuit}. 

\subsection{Problem Formulation}
\label{subsec:notation}

The setup is illustrated in \cref{fig:iterative}. Our goal is to train a parameterized generative model $f(\cdot;\theta)$, for example, given by a DPM or an LLM, that generates data $x \sim \mathcal{P}_\theta$. A reward model $\mathcal{R}$ evaluates the quality of the generated data, serving as a measure of the model performance. We aim to train the generative model to maximize the expected reward
\[r(\theta)=\mathbb{E}_{x \sim \mathcal{P}_{\theta} }[\mathcal{R}(x)],\]
where we assume $\mathcal{R}(x) \in [0, 1]$. 

The iterative training process leverages the generative nature of the model to perform self-generation and refinement based on reward signals. Specifically, we start with an initial model $\theta^{(0)}$. At the $(t+1)$-th iteration, the model undergoes improvement through the following three steps:
\begin{tcolorbox}[colback=yellow!5, left=-25pt, right=2pt]
\begin{itemize}
    \item \textbf{Generation step}: Generate $N_t$ data by $f(\cdot; \theta^{(t)})$. 
    \item \textbf{Selection step}: For each synthetic data point $x$, include it into dataset $D_t$ with probability $\mathcal{R}(x)$. Suppose $n_t$ of the $N_t$ data are selected to form the dataset $D_t$, formally, $|D_t|=n_t$.
    \item \textbf{Updating step}: Update $\theta^{(t)}$ to $\theta^{(t+1)}$ with the dataset $D_t$. 
\end{itemize}
\end{tcolorbox}
This iterative approach generates additional training data using synthetic samples, with the reward-guided selection process ensuring its quality. The selected samples are then used to further enhance the model's generative performance \citep{feng2024beyond}. This process, carried out iteratively to maximize performance gains, is particularly effective in scenarios where high-quality specialized data is scarce, for example, in mathematical reasoning \citep{zelikman2022star, guan2025rstar, singhbeyond}. The algorithm framework is formalized in Algorithm \ref{alg:iterative}. In the selection step, the data may be selected with noise. We present a simple form here for ease of understanding, while our results extend to the noisy case in \cref{appen-subsec:noisy_reward}.

When practitioners use this algorithm, a key challenge is determining how to set $N_t$ and $n_t$ across iterations. This decision is particularly important, as the generation process of foundation models unually involves multiple forward propagations, and consequently, the generation cost often exceeds the training cost.
Given a limited budget, it is crucial to design the process in a way that maximizes post-training gains. 

Specifically, we focus on identifying the optimal policy of $N_t$ and $n_t$ correlated with the generation cost and the training cost, respectively.
Given the reward model and generator, $N_t$ can be viewed as a random variable dependent on $n_t$ and the selection rate. Therefore, we define the policy over $n_t$ as $\pi: \left\{n_t\right\}_{t=0,1,\cdots}$, and our objective is to identify the best scheme for $\pi$. 


\begin{algorithm}
\caption{Iterative learning with synthetic data}
\label{alg:iterative}
\begin{algorithmic}[1]
\STATE \textbf{Input:} An initial generative model $f(\cdot; \theta^{(0)})$, a reward model $\mathcal R$, and a policy $\pi:\{n_t\}_t$ with a termination time $T$. 
\STATE \textbf{Output:} A fine-tuned generative model $f(\cdot; \theta^{(T)})$.
\vspace{2ex}
\FOR{$t \gets 0 \ \text{to} \ {T-1}$}
\STATE Initialize selected synthetic dataset $D_t = \varnothing$.
\vspace{1ex}
\WHILE{$|D_t| < n_t$}
\STATE Generate a sample $x \sim \mathcal{P}_{\theta^{(t)}}$.
\STATE Add $x$ to $D_t$ with probability $\mathcal{R}(x)$.
\ENDWHILE
\vspace{1ex}
\STATE Update $\theta^{(t)}$ to $\theta^{(t+1)}$ using the dataset $D_t$.
\vspace{1ex}
\ENDFOR
\end{algorithmic}
\end{algorithm}

\subsection{Gaussian Case Study}\label{subsec:intuit}

To begin, we present a warm-up example with Gaussian data, where the optimal policy can be derived analytically. This example provides theoretical insight into the general iterative learning framework. The setup includes Gaussian data, an exponential reward function, and maximum likelihood estimation (MLE) as the updating algorithm.

\paragraph{Gaussian Generator and Exponential Reward Model.} To simplify the setting, we consider the generative model as a one-dimensional Gaussian distribution $\mathcal{P}_{\theta}=\mathcal{N}(\theta, \sigma^2)$, where the mean $\theta$ is the parameter to learn, and the variance $\sigma^2$ is fixed. The reward model is an exponential function $\mathcal{R}(x) = \exp\left(- x^2/(2\kappa^2) \right)$, where $\kappa$ is also fixed. 

\paragraph{Updating via MLE.} In \cref{alg:iterative}, we employ MLE as the updating algorithm at line 9. Specifically, the parameter $\theta^{(t+1)}$ updated in the $t$-th iteration is obtained by
$$
\theta^{(t+1)} = {\arg\max}_\theta \prod_{x \in {D}_t} p_\theta(x),
$$
where $p_\theta(x)$ is the density function of the distribution $\mathcal{P}_\theta$.

\paragraph{Optimization Problem.} We aim to find the optimal iterative learning policy within this setup. We assume that the cost of sampling the Gaussian distribution is negligible, while the training cost for MLE is proportional to the size of the dataset, $n_t$. The objective is to maximize the expected performance of the generator in iteration $T$, as measured by the reward. Given a fixed budget $C$, the optimal policy will be the solution to
\begin{equation} \label{eq:opt-intuition}
    \max_{\{n_t\}_{t=0}^{T-1}} \mathbb{E}_{\theta^{(T)}}\left[r\left(\theta^{(T)}\right)\right]\quad \text{s.t.} \quad \sum_{t=0}^{T-1} n_t \leq C.
\end{equation}

For the above setup, we can theoretically identify the optimal policy in the following theorem. 

\begin{theorem}
\label{thm:intuition}
If the initial parameter satisfies
$
\theta^{(0)} \leq (1 + \sigma^2/\kappa^2)^T (\sigma^2 + \kappa^2)^{1/2}, 
$
then the optimal iterative policy, i.e., the solution of the optimization problem in \cref{eq:opt-intuition}, 
is given by
$$
n_t \propto \left( 1+\frac{\sigma^2}{\kappa^2} \right)^t.
$$
\end{theorem}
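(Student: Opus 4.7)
The plan is to convert the iterative update into an explicit distributional recursion for $\theta^{(t)}$, reduce the problem to a deterministic constrained optimization over the variance sequence, and then apply a Lagrange-multiplier argument.

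First I would compute the two basic quantities in closed form. For $x\sim\mathcal{N}(\theta,\sigma^2)$ with the Gaussian reward, completing the square gives
\begin{equation*}
 r(\theta)\;=\;\sqrt{\tfrac{\kappa^2}{\kappa^2+\sigma^2}}\,\exp\!\left(-\tfrac{\theta^2}{2(\kappa^2+\sigma^2)}\right),
\end{equation*}
and the reward-weighted density $p_{\theta^{(t)}}(x)\mathcal{R}(x)$ is, up to normalization, the density of $\mathcal{N}(\alpha\theta^{(t)},\alpha\sigma^2)$ with $\alpha := \kappa^2/(\kappa^2+\sigma^2)$. Since MLE for a Gaussian with known variance just returns the sample mean, and $D_t$ consists of $n_t$ i.i.d.\ draws from that accepted distribution, the updating step yields the exact law
\begin{equation*}
 \theta^{(t+1)}\mid\theta^{(t)}\;\sim\;\mathcal{N}\!\left(\alpha\theta^{(t)},\,\tfrac{\alpha\sigma^2}{n_t}\right).
\end{equation*}

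Next I would unroll this recursion. Because each step is conditionally Gaussian, $\theta^{(T)}$ is itself Gaussian with mean $\mu_T=\alpha^T\theta^{(0)}$ and variance $v_T=\alpha\sigma^2\sum_{s=0}^{T-1}\alpha^{2(T-1-s)}/n_s$, which I would obtain by iterating $v_{t+1}=\alpha^2 v_t+\alpha\sigma^2/n_t$ starting from $v_0=0$. Plugging the Gaussian law of $\theta^{(T)}$ into the Gaussian integral for $r$, and writing $w:=1+v_T/(\kappa^2+\sigma^2)\geq 1$, gives the clean expression
\begin{equation*}
 \mathbb{E}\!\left[r\!\left(\theta^{(T)}\right)\right]\;=\;\sqrt{\alpha}\cdot w^{-1/2}\exp\!\left(-\tfrac{\mu_T^2}{2w(\kappa^2+\sigma^2)}\right).
\end{equation*}

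The crucial monotonicity step is to differentiate $\log\mathbb{E}[r(\theta^{(T)})]$ in $w$ and observe that its unique stationary point is $w^\star=\mu_T^2/(\kappa^2+\sigma^2)=\alpha^{2T}(\theta^{(0)})^2/(\kappa^2+\sigma^2)$, so the objective is \emph{decreasing} on $w>w^\star$. The hypothesis $\theta^{(0)}\leq(1+\sigma^2/\kappa^2)^T(\kappa^2+\sigma^2)^{1/2}=\alpha^{-T}(\kappa^2+\sigma^2)^{1/2}$ is precisely $w^\star\leq 1$, so on the feasible region $w\geq 1$ the objective is strictly decreasing in $w$, hence in $v_T$. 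Therefore maximizing expected reward is equivalent to minimizing $v_T$ on $\sum_t n_t\leq C$, i.e.\ to
\begin{equation*}
 \min_{n_0,\dots,n_{T-1}>0}\;\sum_{s=0}^{T-1}\frac{\alpha^{-2s}}{n_s}\quad\text{s.t.}\quad\sum_{s=0}^{T-1} n_s\leq C.
\end{equation*}
A standard Lagrange multiplier (or Cauchy--Schwarz) calculation yields $n_s\propto\alpha^{-s}=(1+\sigma^2/\kappa^2)^s$, which is exactly the claimed policy.

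The main obstacle I anticipate is the monotonicity analysis of $\mathbb{E}[r(\theta^{(T)})]$ as a function of $v_T$: one must correctly identify the regime where shrinking variance helps (rather than hurts) so that the original stochastic problem collapses to a deterministic $1/n_s$-minimization. Verifying that the threshold $w^\star\leq 1$ matches the stated bound on $\theta^{(0)}$ is where the hypothesis of the theorem actually gets used; everything else is explicit Gaussian algebra.
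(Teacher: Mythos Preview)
Your proposal is correct and follows essentially the same route as the paper: compute the post-selection Gaussian law, unroll the linear-Gaussian recursion to get $\theta^{(T)}\sim\mathcal{N}(\mu_T,v_T)$, evaluate $\mathbb{E}[r(\theta^{(T)})]$ in closed form, use the hypothesis on $\theta^{(0)}$ to reduce to minimizing $v_T$, and finish with Cauchy--Schwarz (equivalently, your Lagrange-multiplier step). The paper's proof differs only cosmetically---it writes everything in terms of $1+\sigma^2/\kappa^2$ rather than your $\alpha$, and states the monotonicity in $1/\sqrt{\sigma^2+\kappa^2+\sigma_T^2}$ rather than via your substitution $w$---but the argument is the same.
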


The proof is deferred to \cref{appen-sec:theory-intuition} and extends naturally to the high-dimensional case, as detailed in \cref{appen-subsec:implementation-toy}. \cref{thm:intuition} demonstrates that the optimal policy in this setting follows an exponential scheme. The optimal strategy follows this rationale: In an iterative optimization algorithm, the contribution of an early update to the overall error decays exponentially with the total number of iterations $T$. As a result, achieving a low final error increasingly depends on the accuracy of later updates. To keep those updates precise enough, each iteration must therefore draw on an exponentially larger number of samples. We will further analyze this exponential scheme in the general analysis in \cref{sec:theory}. 

\paragraph{Experiments.} We include an empirical experiment to confirm \cref{thm:intuition} with data in two-dimensional space. The generator is set to $\mathcal{N}(\bm \theta, \bm I_2)$. The reward function is $\mathcal{R}(\bm x) = \exp\left(- \lVert \bm x \rVert^2/(2\kappa^2) \right)$, which favors data concentrated near the origin. $\kappa$ controls the flatness of the reward. Full details are deferred to \cref{append_subsec:toy_exp}. \cref{fig:exp-intuition} plots the gap to the optimal reward as a function of the cumulative cost, defined as the total computational cost incurred up to that point. The results demonstrate that the exponential policy consistently approaches the optimal reward faster than the other schemes. While the linear policy converges slightly more slowly, the constant policy, as shown in the left figure, clearly fails to converge. These findings validate our theoretical proof. Additional results with various parameters are left in \cref{appen-subsec:toy}.

\begin{figure}
    \centering
    \includegraphics[width=.7\linewidth]{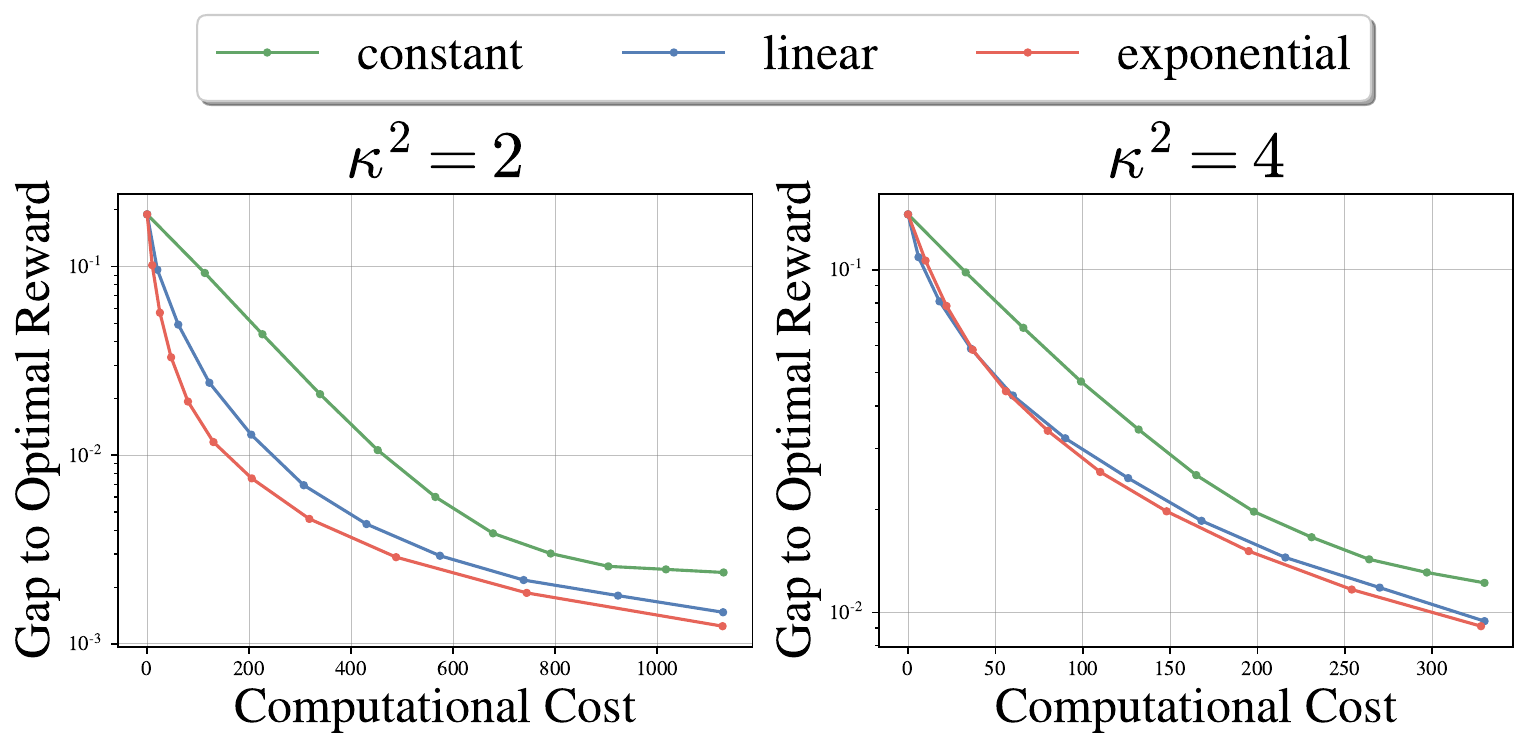}
    \caption{\textbf{Empirical results of the toy example.} We compare the exponential, constant, and linear policy, and show the gap to the optimal expected reward as a function of the computational cost ($\sum_{t}n_t$). All the results are averaged over 1,000 runs.}
    \label{fig:exp-intuition}
\end{figure}


\section{Main Theoretical Results}\label{sec:theory}

We now proceed to analyze the general setting. We will show that constant policies fail to converge with high probability, whereas there exists an exponential growth policy that achieves an exponential convergence rate and outperforms polynomial alternatives in the worst case. The assumptions and setup are outlined below, while detailed proofs for the subsequent theorems are provided in \cref{app:proofs}. 


\paragraph{Assumptions.} We consider a general function class for the generative model, a general loss with regularity assumptions (\cref{assum:gradient_assum,assum:decrease_assum}), and a reward model with mild assumptions about its relationship with the loss (\cref{assum:loss_to_object}). The formal descriptions are provided in \cref{appen-sec:theory-main}. The warm-up example in the previous section serves as a special case that satisfies all these assumptions. Our results also naturally extend to noisy rewards, as discussed in \cref{appen-subsec:noisy_reward}.

\paragraph{Updating via Gradient Descent.} 
In \cref{alg:iterative}, we use gradient descent with a learning rate $\eta>0$ as the update algorithm at line 9. Specifically, the update for the $t$-th iteration writes
$$
    \theta^{(t+1)} = \theta^{(t)} - \frac{\eta}{n_t}\sum_{x\in D_t}\nabla_\theta l\left(x;\theta^{(t)}\right), \quad t=0,1,\cdots,
$$
where $l(x;\theta)$ is the loss function. For any policy $\pi$, let $\left\{\theta_\pi^{(t)}\right\}_t$ denote the iterative trajectory of the parameter.

\paragraph{Reward and Cost.} Similarly to the warm-up example, we evaluate the generative model using the expected reward $r(\theta)$. The optimal performance is the supremum of the expected reward over the whole parameter space
$$
r^* = \sup_{\theta} r(\theta). 
$$
We consider a general computational cost that accounts for both generation and training, with constant coefficients $c_\mathrm{g}$ and $c_\mathrm{t}$ for each data point, respectively. Then, the total cost is given by
$$
C(\pi, T) = \sum_{t=0}^{T-1} c_\mathrm{g} N_t  + c_\mathrm{t} n_t.
$$ 


The first policy class that we consider is
\begin{tcolorbox}[colback=blue!5, left=-25pt, right=2pt]
\begin{itemize}
    \item \textbf{constant policy}:
    $
\pi_{\mathrm{const}}: n_t = n_0, \quad t = 0, 1, \cdots. 
    $
\end{itemize}
\end{tcolorbox}
This is the most standard policy, where $n_0$ can be set either as the total number of prompts in the dataset \citep{zelikman2022star}, or as a multiple of the batch size in an online setting \citep{guo2024direct}. However, in the following theorem we show that constant policies fail to converge to the optimal reward with high probability.
\begin{theorem}[Bounded Reward for Constant policy]
    \label{thm:constant_policy_main}
    Under \cref{assum:loss_to_object}, there exists a constant $c>0$ such that, for any $T$ and any constant policy $\pi_{\mathrm{const}}$,  
    with probability at least $1/4$, 
    \begin{align*}
        r^* - r\left(\theta_{\pi_{\mathrm{const}}}^{(T)}\right) \ge c n_0^{-1/2}. 
    \end{align*}
\end{theorem}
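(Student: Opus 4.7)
The plan is to isolate the very last gradient step and exploit its irreducible statistical noise, whose scale is $\Theta(n_0^{-1/2})$ by the central limit theorem. Since the conclusion must hold for \emph{every} $T$ and every constant policy, it suffices to show that one step of stochastic gradient descent, starting from an arbitrary $\theta^{(T-1)}$, lands at distance at least $c\,n_0^{-1/2}$ from the maximizer with probability $\geq 1/4$. Working with a single iteration sidesteps the need to track the trajectory of a non-contractive recursion.

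\textbf{Steps.} First, I would decompose the final iterate as
\[
\theta^{(T)} - \theta^\star \;=\; \mu \;+\; \eta\,\xi, \qquad \mu \;:=\; \theta^{(T-1)} - \theta^\star - \eta\,g\!\left(\theta^{(T-1)}\right),
\]
where $g(\theta)$ is the population selected gradient and $\xi$ is the zero-mean sampling noise obtained by averaging $n_0$ i.i.d.\ selected draws. Under \cref{assum:gradient_assum,assum:decrease_assum}, $\xi$ has conditional covariance of exact order $n_0^{-1}$, with a matching lower bound on some unit direction $u$, and a conditional fourth moment of order $n_0^{-2}$ (the latter coming from the boundedness of $\mathcal R$ and the gradient regularity). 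Second, I would apply the Paley--Zygmund inequality to the non-negative scalar $Y := \bigl(\langle \mu, u\rangle + \eta\langle \xi, u\rangle\bigr)^2$. Its conditional mean is at least $\eta^2 \mathrm{Var}(\langle \xi, u\rangle) \gtrsim n_0^{-1}$ regardless of $\mu$, while its conditional second moment is $O(n_0^{-2})$ up to a constant depending on $\|\mu\|$, which can be absorbed since larger $\|\mu\|$ only helps the conclusion. This yields
\[
\mathbb P\!\left(\|\theta^{(T)} - \theta^\star\| \,\geq\, c_1\,n_0^{-1/2} \,\middle|\, \mathcal F_{T-1}\right) \,\geq\, \tfrac14,
\]
and hence the same bound unconditionally. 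Third, \cref{assum:loss_to_object} supplies a monotone lower bound $r^\star - r(\theta) \geq \phi(\|\theta - \theta^\star\|)$ that vanishes at most linearly near $0$, giving $r^\star - r\bigl(\theta^{(T)}\bigr) \geq c\,n_0^{-1/2}$ on the same event for an absolute constant $c$ depending only on the regularity constants.

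\textbf{Main obstacle.} The delicate point is the anti-concentration step. A variance lower bound on $\xi$ alone does not imply that $\xi$ spreads out: a heavy-tailed noise with most of its mass near zero would satisfy the variance bound while concentrating at the origin, and a conditional mean $\mu$ pointing toward $\theta^\star$ could cancel the spread. The Paley--Zygmund argument addresses both issues simultaneously, but only after securing a matching fourth-moment upper bound, which I would derive from $\mathcal R \in [0,1]$ together with the Lipschitz regularity in \cref{assum:gradient_assum}. A secondary subtlety is ensuring that the reward-weighted selected distribution does not degenerate the gradient covariance at the iterate; this reduces to checking that $\mathcal R$ does not vanish on a neighborhood of typical samples, a mild consequence of the assumptions. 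Finally, the constant $1/4$ is essentially arithmetic: the ratio of the squared first moment to the second moment of $Y$ is lower-bounded by a universal constant, and I expect $1/4$ to be a comfortable choice with room to spare.
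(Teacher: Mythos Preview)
Your proposal has a genuine gap in the final conversion step. \Cref{assum:loss_to_object} does \emph{not} furnish a lower bound of the form $r^\star - r(\theta) \geq \phi(\|\theta - \theta^\star\|)$: it lower-bounds the reward gap by the \emph{expected self-loss} $c_3\,\mathbb{P}l_\theta(\theta) = c_3\,\mathbb{E}_{x\sim\mathcal Q_\theta}[l(x;\theta)]$, a quantity living in loss space, and says nothing about how this scales with parameter distance. Even in the paper's Gaussian warm-up the reward gap is \emph{quadratic} in $|\theta-\theta^\star|$ near the optimum, so an anti-concentration bound $\|\theta^{(T)}-\theta^\star\|\gtrsim n_0^{-1/2}$ would yield only $r^\star - r \gtrsim n_0^{-1}$, not the claimed $n_0^{-1/2}$. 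Your Paley--Zygmund argument on the last gradient step, however carefully executed, therefore cannot reach the conclusion through \cref{assum:loss_to_object} alone. That you find yourself invoking \cref{assum:gradient_assum,assum:decrease_assum}, which the theorem statement does not assume, is already a symptom of this mismatch.

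The paper's proof stays entirely in loss space and avoids parameter geometry. From \cref{assum:loss_to_object} one has $r^\star - r(\theta^{(T)}) \geq c_3\,\mathbb{P}l_{\theta^{(T)}}(\theta^{(T)})$. Writing $\mathbb{P}l_{\theta^{(T)}}(\theta^{(T)}) = \mathbb{P}_{n_0}l(X;\theta^{(T)}) + \bigl(\mathbb{P}l_{\theta^{(T)}}(\theta^{(T)}) - \mathbb{P}_{n_0}l(X;\theta^{(T)})\bigr)$ and dropping the empirical term by non-negativity of the loss, one is left with the centred fluctuation of $l(x_i;\theta^{(T)})$ over the $n_0$ selected samples. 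The central limit theorem then gives that this fluctuation exceeds a constant multiple of $n_0^{-1/2}$ (with the constant involving $\mathrm{Var}[l(X;\theta^{(T)})]$) with probability at least $1/4$, which is the claim. No parameter-space geometry and no assumptions beyond \cref{assum:loss_to_object} are needed.
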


The theory establishes the non-convergence of constant policies, which stems from persistent random noise in the gradient. Specifically, the noise term is proportional to $n_t^{-1}$ and remains non-decaying under any constant policy, resulting in suboptimal performance.

To address this limitation, we introduce increasing policies, where $n_t$ grows monotonically over iterations. The following theorem shows that increasing policies ensure convergence to the optimal reward with high probability.

\begin{theorem}[Optimal Reward for Increasing Policies]
\label{thm:polynomial_policy_main}
Under \cref{assum:gradient_assum} to \cref{assum:loss_to_object}, for any increasing policy $\pi$ and $\varepsilon > 0$, there exists a sufficiently large $T$ such that, with probability greater than $\left(1 - \sum_{t=0}^{T-1}n_t^{-4}\right)$,
\begin{align*}
r^* - r\left(\theta_{\pi}^{(T)}\right) \le \varepsilon.
\end{align*}
\end{theorem}

This theorem suggests that increasing policies should be used in practice to maximize performance. 
Hence, we consider the following two classes, namely,
\begin{tcolorbox}[colback=blue!5, left=-25pt, right=2pt]
\begin{itemize}
    \item \textbf{polynomial growth policy}:
    $
    \pi_{\mathrm{poly}}: n_t = n_0 (1+t)^{\alpha}, \quad \alpha > 0, \, t=0, 1, \cdots;
    $
    \item \textbf{exponential growth policy}:
    $
    \pi_{\mathrm{exp}}: n_t = n_0(1+u)^t, \quad u > 0, \, t=0, 1, \cdots. 
    $
\end{itemize}
\end{tcolorbox}

For the exponential policy, we can identify a specific policy that achieves an exponential convergence rate as follows.



\begin{theorem}[Convergence Rate for the Exponential Policy]
\label{thm:exp_converge_main} Under \cref{assum:gradient_assum,assum:decrease_assum,assum:loss_to_object}, there exists an exponential growth policy $\pi_\mathrm{exp}^*$
    such that with probability at least $\left(1-\sum_{t=0}^{T-1} n_t^{-4}\right)$, 
    \begin{align*}
          r^*-r\left(\theta_{\pi_\mathrm{exp}^*}^{(T)}\right) = \mathcal O\left(\left(1+\zeta\right)^{-2T}\right), 
    \end{align*}
    where $\zeta>0$ is a constant defined in \cref{thm:exp_converge}.
\end{theorem}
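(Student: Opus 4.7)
The plan is to analyze the stochastic gradient descent recursion of \cref{alg:iterative} under the exponential growth policy $\pi_\mathrm{exp}$ and balance the deterministic contraction of the gradient step against the finite-sample noise from $D_t$. I would first write the update as
$$\theta^{(t+1)} = \theta^{(t)} - \eta\,\bar g\bigl(\theta^{(t)}\bigr) - \eta\,\xi_t,$$
where $\bar g(\theta)$ is the expected gradient under the reward-weighted selection distribution and $\xi_t$ is the mean-zero sampling fluctuation. By \cref{assum:gradient_assum}, $\nabla_\theta l$ is uniformly controlled, so a vector concentration inequality applied to the empirical mean of $n_t$ conditionally i.i.d.\ gradients yields $\|\xi_t\|\le c_1\,n_t^{-1/2}$ with probability at least $1-n_t^{-4}$. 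A union bound over $t=0,\dots,T-1$ then produces exactly the failure budget $\sum_t n_t^{-4}$ claimed in the theorem, and this budget remains finite since $\sum_{t\ge 0}n_t^{-4}$ is geometrically summable under $\pi_\mathrm{exp}$.

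Second, I would combine this noisy update with \cref{assum:decrease_assum} to derive a one-step descent inequality for a Lyapunov quantity $\Delta_t$ that governs the reward gap through \cref{assum:loss_to_object}. Under the standard smoothness/PL-style pattern this should read
$$\Delta_{t+1} \le (1-\rho)\,\Delta_t + c_2\,\|\xi_t\|^2,$$
for some contraction rate $\rho\in(0,1)$ determined by the regularity assumptions. Substituting the high-probability bound on $\|\xi_t\|^2$ and unrolling gives
$$\Delta_T \le (1-\rho)^T\Delta_0 + \frac{c_1^2 c_2}{n_0}\sum_{t=0}^{T-1}(1-\rho)^{T-1-t}(1+u)^{-t}.$$
The policy $\pi_\mathrm{exp}^*$ is then identified by choosing $u$ to balance the two factors in the geometric sum: taking $(1+u)^{-1}\le 1-\rho$ (with strict inequality, or by absorbing a linear-in-$T$ prefactor at equality) produces a pure exponential rate of the form $(1-\rho')^T$ for some $\rho'\in(0,\rho]$. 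Invoking \cref{assum:loss_to_object} to convert $\Delta_T$ into $r^* - r(\theta^{(T)})$---a step that squares the parameter-level rate when the loss-to-reward map is smooth and stationary at the optimum---yields the claimed $\mathcal O\bigl((1+\zeta)^{-2T}\bigr)$ bound, and $\zeta$ is read off from the exponential base that survives this conversion.

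The main obstacle will be the balancing step at the boundary case $(1+u)^{-1}=1-\rho$, where the unrolled recursion produces an $O(T)$ prefactor that must be absorbed into the exponential base without weakening the rate; this forces either a strict inequality in the choice of $u$ or a slightly smaller $\zeta$ than the nominal contraction rate. A secondary subtlety is that $\bar g(\theta^{(t)})$ is the gradient of the loss under the reward-weighted distribution, not under $\mathcal P_{\theta^{(t)}}$ itself, so \cref{assum:decrease_assum} must be invoked with respect to the correct objective---the same fixed point that $\pi_\mathrm{exp}$ is driving $\theta^{(t)}$ toward---which requires a short but careful verification consistent with the warm-up analysis in \cref{subsec:intuit}.
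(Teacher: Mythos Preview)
Your high-level architecture (contraction plus finite-sample noise, balanced by the growth rate of $n_t$) matches the paper, but two points in your plan are genuine errors rather than stylistic differences.

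\textbf{The exponent doubling does not come from \cref{assum:loss_to_object}.} You write that converting $\Delta_T$ to $r^*-r(\theta^{(T)})$ ``squares the parameter-level rate when the loss-to-reward map is smooth and stationary at the optimum.'' But \cref{assum:loss_to_object} is explicitly \emph{linear}: $c_3\,\mathbb{P}l_\theta(\theta)\le r^*-r(\theta)\le c_2\,\mathbb{P}l_\theta(\theta)$. There is no squaring in that step. In the paper's argument the factor of $2$ in the exponent arises on the \emph{policy side}: the finite-sample term enters the one-step loss decrement at order $n_t^{-1/2}$ (uniform convergence of the loss via Dudley and the covering-number bound in \cref{assum:gradient_assum}), so forcing it below the current loss level $(1-\gamma c/(8\beta))^t=(1+\zeta)^{-t}$ requires $n_t\propto(1+\zeta)^{2t}$. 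Your recursion instead has the noise entering as $\|\xi_t\|^2\sim n_t^{-1}$, which would dictate the slower growth $n_t\propto(1-\rho)^{-t}$ and leave you with $\Delta_T=O((1-\rho)^T)$ and no mechanism to recover the stated exponent. In short, the paper controls \emph{loss values} uniformly (hence $n_t^{-1/2}$), not gradient fluctuations (which would give $n_t^{-1}$); this distinction is exactly where the $2T$ lives.

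\textbf{\Cref{assum:decrease_assum} is the heart of the descent, not a secondary subtlety.} Your one-step inequality $\Delta_{t+1}\le(1-\rho)\Delta_t+c_2\|\xi_t\|^2$ presupposes a fixed objective, but here the Lyapunov quantity is $\tilde{\mathbb{P}}l_{\theta^{(t)}}:=\mathbb{P}l_{\theta^{(t)}}(\theta^{(t)})$, whose \emph{sampling distribution changes with $t$}. The paper's proof first shows descent of $\theta\mapsto\mathbb{P}l_{\theta^{(t)}}(\theta)$ (Taylor expansion plus the pointwise PL bound $\|\nabla l\|^2\ge c\,l$ from \cref{assum:gradient_assum}, combined with uniform convergence to pass between $\mathbb{P}_{n_t}l$ and $\mathbb{P}l_{\theta^{(t)}}$), and only \emph{then} invokes \cref{assum:decrease_assum} to transfer this to a decrease in $\tilde{\mathbb{P}}l$, picking up the factor $\gamma$. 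Without this step you cannot chain the per-iteration contractions, because the gradient you compute is with respect to $\mathbb{P}l_{\theta^{(t)}}(\cdot)$ while the quantity you want to shrink is $\mathbb{P}l_{\theta^{(t+1)}}(\theta^{(t+1)})$. Your proposal gestures at this but does not explain how the recursion on $\Delta_t$ survives the distributional shift; in the paper this is precisely the role of \cref{assum:decrease_assum}, and the contraction constant $\gamma c/(8\beta)$ (hence $\zeta=\gamma c/(8\beta-\gamma c)$) is determined by that step.
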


This theorem also applies to noisy rewards, as discussed in \cref{appen-subsec:noisy_reward}.

Notice that the above three theorems analyze the convergence of the respective policies (constant, polynomial, and exponential) as the number of iterations $T$ grows without imposing any limit on the total computational budget.
We now {introduce the budget constraint and} compare the exponential policy with the polynomial policy. Specifically, for any policy $\pi$, we introduce
\begin{equation}\label{eq:T-star}
T^*(\pi, \varepsilon) = \min_T \left\{ T \mid r^* - r\left(\theta_{\pi}^{(T)}\right) \le \varepsilon \right\},
\end{equation}
denoting the minimum number of iterations needed for the policy $\pi$ to achieve a reward that is within $\varepsilon$ of the optimal $r^*$. The cost $C(\pi, T^*(\pi, \varepsilon))$ is then the minimum cost to attain the target performance of the policy $\pi$. The following theorem compares the total cost incurred by the exponential growth policy and polynomial policies in a worst-case scenario. 

\begin{theorem}[Worst-Case Optimality of the Exponential Policy, informal]
    \label{thm:policy_compare_main}
    For any constant or polynomial growth policy $\pi$ and a sufficiently small $\varepsilon$, we have 
    $$
    \sup \ C(\pi_{\mathrm{exp}}^*, T^*(\pi_{\mathrm{exp}}^*, \varepsilon)) < \sup \ C(\pi, T^*(\pi, \varepsilon))
    $$
    with probability at least $\left(1-\sum_{t=0}^{T^*(\pi^*_{\mathrm{exp}},\varepsilon)-1} n_t^{-4}\right)$. 
    Here, the supremum is taken over all feasible problem settings, including parameter spaces, loss functions, and reward functions under mild assumptions.
\end{theorem}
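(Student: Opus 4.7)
The strategy is to bound the worst-case quantity $\sup C(\pi, T^*(\pi, \varepsilon))$ as $\varepsilon \downarrow 0$ for each of the three policy families and to verify that the exponential bound is strictly smaller than the constant and polynomial ones. I would express each bound as $\varepsilon^{-\gamma(\pi)}$ up to logarithmic factors, reducing the comparison to one between the exponents $\gamma(\pi)$; the high-probability clause $1 - \sum_{t=0}^{T^*(\pi_\mathrm{exp}^*,\varepsilon)-1} n_t^{-4}$ comes from intersecting the events of \cref{thm:constant_policy_main,thm:exp_converge_main}.

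\textbf{Upper bound for $\pi_\mathrm{exp}^*$.} By \cref{thm:exp_converge_main}, $T^*(\pi_\mathrm{exp}^*,\varepsilon) = \mathcal O(\log(1/\varepsilon))$. Because $\mathbb E[\mathcal R(x)\mid\theta^{(t)}]$ is bounded below once the reward gap is small, $\mathbb E[N_t] = \mathcal O(n_t)$ after finitely many iterations, so the geometric sum of $n_t = n_0(1+u)^t$ yields
\begin{equation*}
\sup C(\pi_\mathrm{exp}^*, T^*(\pi_\mathrm{exp}^*,\varepsilon)) = \mathcal O\bigl((1+u)^{T^*}\bigr) = \mathcal O(\varepsilon^{-\gamma_\mathrm{exp}}),
\end{equation*}
with $\gamma_\mathrm{exp} = \log(1+u)/(2\log(1+\zeta))$. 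The specific $u$ inherited from $\pi_\mathrm{exp}^*$ in \cref{thm:exp_converge_main} makes $\gamma_\mathrm{exp}$ a fixed constant, and a brief verification shows that the policy achieving the stated rate can be arranged so that $\gamma_\mathrm{exp}\le 1$.

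\textbf{Lower bounds for $\pi_\mathrm{const}$ and $\pi_\mathrm{poly}$.} For constant policies, \cref{thm:constant_policy_main} forces $n_0 \ge c^2/\varepsilon^2$ in order to drive the reward gap below $\varepsilon$ with probability exceeding $3/4$, so the training cost alone gives $\sup C(\pi_\mathrm{const}, T^*) = \Omega(\varepsilon^{-2})$, already dominating $\gamma_\mathrm{exp}\le 1$. For polynomial policies $n_t = n_0(1+t)^\alpha$, I would adapt the concentration-plus-contraction argument behind \cref{thm:exp_converge_main} to the polynomial schedule: the per-iteration gradient noise scales as $n_t^{-1/2} \sim (1+t)^{-\alpha/2}$ while the deterministic part contracts by a factor $\rho\in(0,1)$, and unrolling the resulting stochastic recursion shows the reward gap saturates at $\Theta(T^{-\alpha})$ in the worst case. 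Therefore $T^*(\pi_\mathrm{poly},\varepsilon) = \Theta(\varepsilon^{-1/\alpha})$ and
\begin{equation*}
\sup C(\pi_\mathrm{poly}, T^*(\pi_\mathrm{poly},\varepsilon)) = \Theta\Bigl(\sum_{t=0}^{T^*-1} n_0(1+t)^\alpha\Bigr) = \Theta(\varepsilon^{-(\alpha+1)/\alpha}),
\end{equation*}
with $(\alpha+1)/\alpha = 1 + 1/\alpha > \gamma_\mathrm{exp}$ for every finite $\alpha>0$, completing the exponent comparison in both cases.

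\textbf{Main obstacle.} The delicate step is the $\Omega$ side of the polynomial cost: I must exhibit a concrete instance, consistent with \cref{assum:gradient_assum,assum:decrease_assum,assum:loss_to_object}, on which the polynomial policy actually saturates the $T^{-\alpha}$ reward gap rather than merely being upper bounded by it. A natural candidate is the Gaussian/exponential setup of \cref{subsec:intuit} with a suitably scaled variance, because the MLE update there reduces to a linear stochastic recursion whose equilibrium variance is $\Theta(1/n_t)$ and matches the upper bound term by term; a parallel instance supplies the constant-policy lower bound. A routine side issue is controlling the generation term $c_\mathrm{g}N_t$ when the selection probability is small in early iterations, which is handled by a sub-geometric tail bound on $N_t$ absorbed into the stated failure probability via a union bound.
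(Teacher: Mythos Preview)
Your strategy is exactly the paper's: upper-bound $\sup C(\pi_\mathrm{exp}^*,T^*)$ via \cref{thm:exp_converge_main}, lower-bound $\sup C(\pi,T^*)$ for constant and polynomial $\pi$ by exhibiting a hard instance in which the one-step contraction-plus-noise inequality is tight, and then compare the resulting powers of $\varepsilon^{-1}$. You also correctly flag the lower-bound construction as the crux; the paper handles it abstractly by positing an instance where the recursion $\tilde{\mathbb P}l_{\theta^{(t+1)}}-\tilde{\mathbb P}l_{\theta^{(t)}}=\min\{-\tfrac{\gamma c}{4\beta}\tilde{\mathbb P}l_{\theta^{(t)}}+\text{const}\cdot n_t^{-1/2},\,0\}$ holds with equality, rather than via the Gaussian example.

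Two corrections are needed. First, your polynomial saturation level is off by a square root: with noise $n_t^{-1/2}\sim(1+t)^{-\alpha/2}$ and a geometric contraction factor, unrolling gives a residual gap of order $T^{-\alpha/2}$, not $T^{-\alpha}$ (the sum $\sum_{t\le T}\rho^{T-t}t^{-\alpha/2}$ is $\Theta(T^{-\alpha/2})$). Hence $T^*(\pi_\mathrm{poly},\varepsilon)=\Theta(\varepsilon^{-2/\alpha})$ and the cost exponent is $2(\alpha+1)/\alpha$, which is what the paper derives. Correspondingly, the paper's exponential upper bound works out to $C_\mathrm{exp}\,\varepsilon^{-2}$ rather than $\varepsilon^{-1}$; the final comparison reads $2<2(\alpha+1)/\alpha$, so your conclusion survives but both exponents are off by the same factor of two. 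Second, the constant case is simpler than you make it: since the policy is fixed before $\varepsilon$ is chosen, \cref{thm:constant_policy_main} gives $T^*(\pi_\mathrm{const},\varepsilon)=\infty$ with probability at least $1/4$ once $\varepsilon<c\,n_0^{-1/2}$, and the inequality holds vacuously; there is no need to treat $n_0$ as a function of $\varepsilon$.
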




For the formal version, please refer to \cref{thm:policy_compare}. This theorem shows that, in a worst-case analysis, the exponential policy $\pi_{\mathrm{exp}}^*$ from \cref{thm:exp_converge} incurs a total cost no greater than that of any polynomial growth or constant policy to reach a specified performance threshold, establishing its efficiency and robustness.
This analysis theoretically highlights the exponential policy as the preferred choice for our iterative learning tasks.

In summary, we draw the following conclusion: 
\begin{tcolorbox}[left=-25pt, right=2pt]
\begin{itemize}
    \item \textbf{Constant policies} lead to rewards that are consistently below optimal by a constant margin.
    \item \textbf{Increasing policies} such as polynomial and exponential ones can achieve optimal performance.
    \item In the worst case, there exists an \textbf{exponential growth policy} ensuring the same performance at equal or lower computational cost compared to any constant or polynomial growth policy.
\end{itemize}
\end{tcolorbox}

\begin{table*}[tb]
\caption{A summarization of all experimental setups and implementations.
}
\label{tab:setup}
\centering
\resizebox{\linewidth}{!}{
    \centering
    \begin{tabular}{l|cccc}
    \toprule
        \textbf{Experiment} 
        & \textbf{Setup Type}
        & \textbf{Generator} & \textbf{Reward} & \textbf{Dataset} \\ \midrule
        Image Denoising 
        & Theoretical Validation 
        & DPM & PSNR & MNIST \citep{deng2012mnist} \\
        Math Reasoning 
        & Practical Scenario 
        & LLM & Accuracy & GSM-Symbolic \citep{mirzadeh2024gsm} \\
    \bottomrule
    \end{tabular}
    }
\end{table*}

\section{Experiments}
\label{sec:exp}
Our theoretical results show that, in iterative bootstrapping, synthetic data curation with an exponential scheme outperforms both linear and polynomial schemes, while linear and polynomial schemes outperform the constant scheme. In our experiments, we evaluate constant, linear (as a simplified polynomial scheme), and exponential schemes.

We conduct two experiments to evaluate our schemes: image denoising with DPMs, and mathematical reasoning with LLMs, covering both image and text domains. The setups and implementation details are summarized in \cref{tab:setup}.

In these settings, we emphasize that our goal is not to propose a novel task-solving method or generation technique. Rather, we focus on optimizing the allocation of generation and training costs in iterative learning with synthetic data---an objective that is method-agnostic and can be applied to any generation approach.
Here, we validate our theoretical results from \cref{sec:theory} and conduct a performance comparison among exponential, linear, and constant schemes on image data (using DPMs) and text data (with LLMs).

\subsection{Image Denoising with Diffusion Models}
We consider applying iterative learning to the image denoising task, a representative problem in image processing, where the goal is to recover $\bm x$ from a noisy observation $\bm y=\bm x+ \bm\eta$ with an unknown Gaussian noise $\bm\eta$.

\paragraph{Setup.} We choose diffusion probabilistic models (DPMs) \citep{ho2020denoising, song2020score} as denoisers due to their intrinsic design, which aligns closely with the denoising process \citep{xie2023diffusion} since the reverse process of a DPM gradually estimates and removes this noise using a trained network, effectively performing denoising. Specifically, we fine-tune a pre-trained diffusion model\footnote{\url{https://huggingface.co/google/ddpm-cifar10-32}} for denoising on the MNIST \citep{deng2012mnist} dataset, and readers may refer to \cref{app:dpm} for more details.

We follow the iterative learning framework to annotate synthetic ``clean'' images to improve the denoising process. In each iteration, the diffusion model first generates synthetic denoised images $\hat{\bm x}$ from the noisy observations. These are then filtered with a reward model linear to the Peak Signal-to-Noise Ratio (PSNR) values compared with the real images.
The filtered synthetic data are then used to fine-tune the model through a standard noise-prediction training procedure, iteratively improving its denoising capability. Although an oracle reward model may not be practical in many real-world scenarios, its use here helps to validate the generality of our theoretical results. Furthermore, existing models or correction functions with prior knowledge could be utilized as the reward, as demonstrated in \cite{gillmanself}.
\begin{figure}
    \centering
    \includegraphics[width=.7\linewidth]{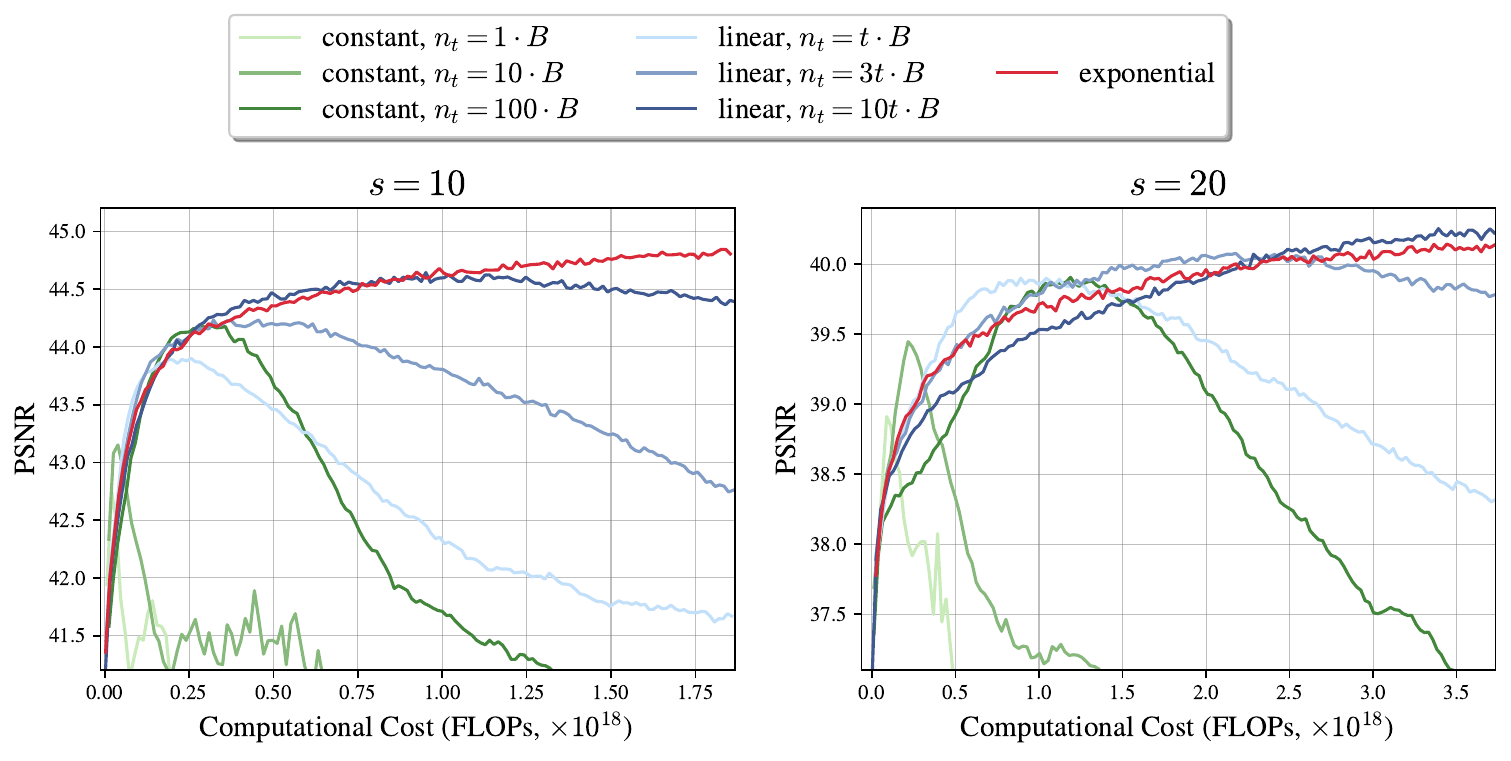}
    \caption{\textbf{Empirical results of image denoising.} The figure shows the average PSNR of the generated denoised images as a function of computational cost. Computational cost is measured in floating-point operations (FLOPs) during iterative learning, including both generation and training. The training batch size is set to $B=640$. We adopt $n_t = 1.1^t \cdot B$ and $n_t = 1.05^t \cdot B$ as the exponential growth policies for $s=10$ and $s=20$, respectively.
    }
    \label{fig:denoising}
\end{figure}





\paragraph{Implementations.}  We perform denoising for two noise levels corresponding to the diffusion step $s=10$ and $20$, where the PSNR of noisy images are 33.25 and 28.40, respectively. 
A summary of the algorithm is provided in \cref{alg:image-denoising}.
We consider three constant and three linear policies, each with small, medium, and large $n_0$. For the exponential policy, we have performed a small hyperparameter search over $n_0$ and $u$, selecting good configurations for two settings. The complete results of the hyperparameter search are presented as an ablation study on different exponential schemes in \cref{app:ablation_exp_image}, demonstrating its robustness. The configurations for all schemes are presented in the caption and legend of \cref{fig:denoising}.

\paragraph{Results.} We summarize the final performance of different schemes and baselines in \cref{tab:math} and show the trade-off between performance and computational cost in \cref{fig:denoising}. 
We observe that the exponential policy yields the best performance for $s=10$, and performs comparable to the best linear policy for $s=20$. This observation demonstrates both the benefits of an increasing policy (\cref{thm:polynomial_policy_main}) and the robustness of the exponential growth policy (\cref{thm:policy_compare_main}). In contrast, the constant policy only matches the exponential policy in the early stages of training, but soon reaches a performance plateau and eventually decays due to overfitting, corroborating \cref{thm:constant_policy_main}.

\begin{table}
    \caption{\textbf{Comparison of each policy on the image denoising and math reasoning tasks.} The final accuracy are determined using a held-out validation set. We use \textbf{boldface} for the best result while the \underline{underline} for the second best.
    For the exponential policies, we only exhibit a single set of configurations for the comparison. We plot for all the policies in \cref{fig:denoising} and \cref{fig:exp-math}, respectively.
    }
    \label{tab:math}
    \vskip 0.15in
    \centering
\vspace{-10pt}
\begin{subtable}{.5\linewidth}
\begin{small}
\caption{Image denoising}
\begin{sc}
    \begin{tabular}{l|c|ccc}
    \toprule
        \textbf{Policy} & $n_t$ &$s=10$  &$s=20$\\ 
        \midrule
        Pre-trained&N/A&40.92&36.96\\
         \midrule
        \multirow{3}{*}{constant} & $1\cdot B$ &42.75&38.91\\
         & $10\cdot B$ &43.15&39.45 \\
         & $100\cdot B$ &44.18 &39.91  \\
        \midrule
        \multirow{3}{*}{linear} & $t\cdot B$ &43.90 & 39.90 \\
         & $3t\cdot B$ &44.23 & 40.08 \\
         & $10t\cdot B$ &\underline{44.64} & \textbf{40.26} \\
        \midrule
        exponential&---&\textbf{44.84}&\underline{40.14}&\\
    \bottomrule
    \end{tabular}
\end{sc}
\end{small}
\end{subtable}
\begin{subtable}{.45\linewidth}
\begin{small}
\caption{Math reasoning}
\begin{sc}
    \begin{tabular}{c|cccc}
    \toprule
        $n_t$ & \texttt{symbolic} & \texttt{p1} & \texttt{p2} \\ 
        \midrule
        N/A & 55.60 & 34.98 & 15.72 \\
         \midrule
         $10\cdot B$ & 60.28 & 38.50 & 16.93 \\
          $30\cdot B$ & 60.16 & 39.68 & 17.73 \\
         $100\cdot B$ & 64.04 & \underline{41.66} & 18.81 \\
        \midrule
        $3t\cdot B$ & 62.54 & 38.88 & 17.01 \\
        $10t\cdot B$ & 61.96 & 39.94 & 17.29 \\
         $30t\cdot B$ & \underline{64.24} & 40.84 & \underline{19.17} \\
        \midrule
        --- & \textbf{65.66} & \textbf{47.26} & \textbf{20.65} \\
    \bottomrule
    \end{tabular}
\end{sc}
\end{small}
\end{subtable}
\end{table}

\subsection{Math Reasoning with LLMs}
Now we move to natural language processing. Iterative learning with synthetic data is also an emerging method for LLMs. Previous research \citep{zelikman2022star} leverages the model to generate synthetic data, which is then filtered using a reward model or accuracy metrics and subsequently utilized to fine-tune the model. This framework is employed in various settings, including instruction tuning \citep{touvron2023llama}, alignment \citep{dong2023raft}, and recent efforts to enhance reasoning capabilities through chain-of-thought (CoT) approaches \citep{hosseini2024v, lehnert2024beyond, singhbeyond}.

\paragraph{Setup.} We select mathematical reasoning to evaluate the effectiveness of our strategy in iterative learning. Let $q$ and $a$ represent the question and its corresponding answer, respectively. We use the LLM to generate synthetic solutions with CoT and in-context examples. These synthetic data are filtered according to the correctness of the final generated answer $\hat{a}$, defined as the reward $\mathcal{R}(\hat{a}) = \mathbf{1}(\hat{a} = a)$. The filtered data are used to further train the model in an iterative process, without filtering based on the correctness of CoT itself. In summary, synthetic data with the correct final solutions are generated and used for training in each iteration, closely mimicking the STaR method \citep{zelikman2022star}.

\paragraph{Implementations.} Many contemporary LLMs are known to overfit to \texttt{GSM8k} \citep{cobbe2021training} or exhibit a certain degree of data leakage \citep{zhang2024careful, mirzadeh2024gsm}, where even minor changes to numbers or names can result in significant performance degradation. This issue renders the evaluation results on \texttt{GSM8k} unreliable. To address this, we regenerate three math datasets---\texttt{GSM\_symbolic}, \texttt{GSM\_p1}, and \texttt{GSM\_p2}---using symbolic math templates from \citet{mirzadeh2024gsm}, one of the most outstanding papers on evaluating math reasoning. Each dataset consists of questions paired with their correct answers. \texttt{GSM\_symbolic} retains a similar difficulty level to \texttt{GSM8k}, while \texttt{GSM\_p1} and \texttt{GSM\_p2} introduce one and two additional clauses per question, respectively, making them progressively more challenging. Together, these datasets complete a robust evaluation set.

In our iterative learning framework, we generate synthetic data using a mixing ratio of 7:2:1 across these three datasets, filter the data based on the correctness of the final answers, and fine-tune the model on the selected data. We use the Llama-3-8B-Base model \citep{dubey2024llama} with full-weight fine-tuning. For evaluation, the model is tested on held-out datasets from all three difficulty levels. During data generation and evaluation, in-context examples and CoT reasoning are used, but in-context examples are excluded during fine-tuning. 

A summary of the algorithm is provided in \cref{alg:math}. Similarly to image denoising, we consider three constant and three linear policies, each with small, medium, and large $n_0$. For the exponential scheme, we have selected one configuration and left the full ablation results in \cref{app:ablation_exp_math}. The configurations for all schemes are detailed in \cref{fig:exp-math} and its caption. We also include the performance of the pre-trained model as a baseline.

\begin{figure}
    \centering
    \includegraphics[width=\linewidth]{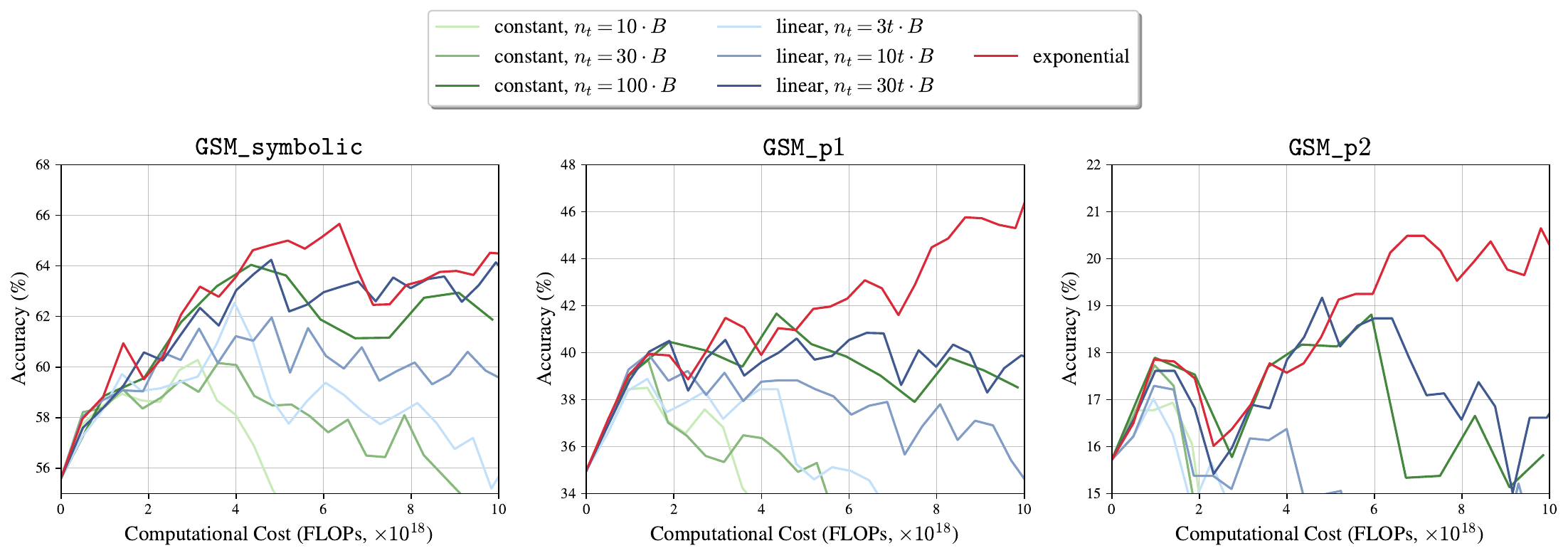}
    \caption{\textbf{Empirical results of math reasoning.} The figure shows the accuracies with respect to the computational cost, measured in FLOPs as well. The training batch size is set to $B=256$. We adopt $n_t = 10 \cdot 2^t \cdot B$ as the exponential policy. We set the temperature to 0.3 during the generation step and include the results for a temperature of 0.7 in \cref{app:temp=0.7_math}, which lead to the same conclusions. }
    \label{fig:exp-math}
\end{figure}

\paragraph{Results.} \cref{fig:exp-math} shows the performance versus computational cost across the three datasets under iterative training, while \cref{tab:math} summarizes the final performance for all policies. Across all three difficulty levels, the exponential growth policy demonstrates the steadiest improvement compared to other schemes. Although the linear policy ranks second overall, it fails to match the exponential growth policy when the training budget is large. In particular, with the exponential growth policy, iterative learning increases accuracy on \texttt{GSM\_p1} from 35\% to 47\% and on \texttt{GSM\_p2} from 16\% to 21\%. Based on these results, we recommend the exponential scheme for iterative learning.



\section{Conclusion}
\label{sec:conclude}

In this work, we take the first step toward understanding how to design optimal budget allocation policies in iterative bootstrapping for supervised fine-tuning. Through a combination of theoretical analysis and empirical validations, we demonstrate that constant policies are insufficient for achieving convergence, whereas increasing policies offer a more effective alternative. Among increasing policies, the exponential growth policy emerges as a robust and efficient method. 

Although our focus here is on SFT in an iterative learning framework, recent research has highlighted the potential of Reinforcement Learning with Human Feedback (RLHF) \citep{yuan2024self, pang2024iterative, setlur2024rl} for similar iterative setups. Extending our framework to incorporate RLHF is a promising direction for future work.
By answering these open questions, we aim to lay a foundation for more efficient iterative training approaches that can improve the post-training of foundation models.


\clearpage

\bibliography{main}
\bibliographystyle{icml2025}





\newpage

\appendix

\section{Theoretic Intuition}
\label{appen-sec:theory-intuition}
\subsection{Discussion on the Gaussian Setting}
\label{appen-subsec:theory-intuition-discussion}
In the Gaussian case, we use a simplified toy example that omits the generation cost only to provide a clear, introductory setup for the reader. In our complete framework (see \cref{sec:theory}), however, we assume a generation cost for standard estimators and incorporate this cost into the objective using the hyperparameter $c_g$. This hyperparameter offers flexibility: setting $c_g=0$ corresponds to free generation, while a large $c_g$ reflects expensive generation. We also provide general theorems that hold for all values of $c_g$, and explain how the Gaussian case satisfies all assumptions of the general theorems in \cref{appen-subsec:assum}. Thus, the Gaussian case can be viewed as a special instance, and the theoretical results still remain valid regardless of whether the sampling is free or costly. 

\subsection{Proof of \cref{thm:intuition}}
\label{appen-subsec:theory-intuition-proof}

We first calculate the distribution of $\theta^{(T)}$ given an initial parameter $\theta^{(0)}$ and a specific policy $\pi: \{n_t\}_{t}$ with the terminal time $T$ in the following lemma. 

\begin{lemma}
\label{lem:margin}
With the MLE updation and Gaussian distributions, given an initial parameter $\theta^{(0)}$ and a specific policy $\pi: \{n_t\}_{t}$ with the terminal time $T$, we have
$$
p(\theta^{(T)} \mid \theta^{(0)}, \pi) = \mathcal{N}\left( \frac{\theta^{(0)}}{(1+\sigma^2/\kappa^2)^T}, \sigma^2\sum_{t=0}^{T-1} \frac{1}{n_t(1+\sigma^2/\kappa^2)^{2(T-t)-1}} \right).
$$
\end{lemma}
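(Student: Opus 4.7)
}
The plan is to first analyze one iteration in closed form and then unroll the resulting linear-Gaussian recursion. Conditional on $\theta^{(t)}$, each accepted sample in $D_t$ is drawn from the density proportional to $p_{\theta^{(t)}}(x)\mathcal R(x)$, i.e.\ the product of $\mathcal N(\theta^{(t)},\sigma^2)$ and $\exp(-x^2/(2\kappa^2))$. Completing the square in the exponent shows that this product is again Gaussian, with variance $\tau^2=\sigma^2\kappa^2/(\sigma^2+\kappa^2)$ and mean $\mu^{(t)}=\theta^{(t)}/(1+\sigma^2/\kappa^2)$. Thus, conditional on $\theta^{(t)}$, the elements of $D_t$ are i.i.d.\ $\mathcal N(\mu^{(t)},\tau^2)$.

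Next I would observe that the MLE of the mean of a Gaussian with known variance is just the sample mean, so the update at line 9 of \cref{alg:iterative} reduces to
\begin{equation*}
\theta^{(t+1)}\;=\;\frac{1}{n_t}\sum_{x\in D_t}x\;\sim\;\mathcal N\!\left(\rho\,\theta^{(t)},\;\frac{\sigma^2\rho}{n_t}\right),
\qquad \rho:=\frac{1}{1+\sigma^2/\kappa^2},
\end{equation*}
where I have used $\tau^2=\sigma^2\rho$. Equivalently, $\theta^{(t+1)}=\rho\,\theta^{(t)}+\varepsilon^{(t)}$ with $\varepsilon^{(t)}\sim\mathcal N(0,\sigma^2\rho/n_t)$ independent across $t$ by construction of the iterative scheme.

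Finally I would unroll this linear-Gaussian recursion to obtain
\begin{equation*}
\theta^{(T)}\;=\;\rho^{T}\theta^{(0)}+\sum_{t=0}^{T-1}\rho^{\,T-1-t}\,\varepsilon^{(t)}.
\end{equation*}
Since the $\varepsilon^{(t)}$ are independent Gaussians, $\theta^{(T)}\mid\theta^{(0)},\pi$ is Gaussian with mean $\rho^{T}\theta^{(0)}=\theta^{(0)}/(1+\sigma^2/\kappa^2)^{T}$ and variance
\begin{equation*}
\sum_{t=0}^{T-1}\rho^{2(T-1-t)}\cdot\frac{\sigma^2\rho}{n_t}\;=\;\sigma^2\sum_{t=0}^{T-1}\frac{1}{n_t(1+\sigma^2/\kappa^2)^{2(T-t)-1}},
\end{equation*}
which is exactly the stated formula. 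The only subtle step is the first one (verifying that the acceptance-filtered distribution is Gaussian and reading off the correct shrinkage factor $\rho$); everything afterwards is bookkeeping on an AR(1)-type recursion, so I do not expect any essential obstacle beyond careful algebra.
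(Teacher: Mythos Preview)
Your proposal is correct and follows essentially the same approach as the paper: identify the post-selection distribution of each accepted sample as $\mathcal N(\rho\,\theta^{(t)},\sigma^2\rho)$ by completing the square, observe that the MLE update is the sample mean, and then unroll the resulting linear-Gaussian (AR(1)-type) recursion to read off the mean and variance of $\theta^{(T)}$. The only cosmetic difference is your use of the shorthand $\rho=1/(1+\sigma^2/\kappa^2)$, which the paper leaves expanded throughout.
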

\begin{proof}
Suppose that there are $N_t$ data $\{x_i\}_{i=1}^{N_t}$ generated from the $t$-th step generator such that the selected sample size is $n_t$. Let $b_i \sim \operatorname{Bernoulli}(\mathcal{R}(x_i))$ as a selecting sign where $b_i=1$ indicates that $x_i$ is selected and placed into $D_t$, otherwise $b_i=0$ indicates that $x_i \notin D_t$. 

We first consider one-step updating. We have the distribution of a data $x_i$ conditioned on $x_i \in D_t$, 
\begin{align*}
p\left(x_i \mid x_i \in D_t\right) & = p\left(x_i \mid b_i = 1\right) \\
& \propto \mathcal{R}(x_i) \cdot \mathcal{N}\left(x_i ; \theta^{(t)}, \sigma^2\right) \\
& \propto \exp \left( -\frac{x_i^2}{2\kappa^2} - \frac{(x_i-\theta^{(t)})^2}{2\sigma^2} \right) \\
& \sim \mathcal{N} \left(\frac{\theta^{(t)}}{1+\sigma^2/\kappa^2} , \frac{\sigma^2}{1+\sigma^2/\kappa^2} \right). 
\end{align*}
Since $\theta^{(t+1)}$ is the solution of the MLE satisfying
\[
\theta^{(t+1)} = {\arg\max}_{\theta} \prod_{x \in D_t} \mathcal{N}(x;\theta, \sigma^2) = \frac{1}{n} \sum_{x \in D_t} x_i, 
\]
we have the distribution of $\theta^{(t+1)}$ conditioned on the parameter $\theta^{(t)}$ and the sample size $n_t$ of the previous iteration, written as
$$
\theta^{(t+1)} \mid \theta^{(t)}, n_t \sim \mathcal{N} \left(\frac{\theta^{(t)}}{1+\sigma^2/\kappa^2} , \frac{\sigma^2}{n_t(1+\sigma^2/\kappa^2)} \right). 
$$

Considering the decomposition
\begin{align*}
    \theta^{(t+1)}=\frac{\theta^{(t)}}{1+\sigma^2/\kappa^2}+\frac{\sigma}{n_t^{1/2}(1+\sigma^2/\kappa^2)^{1/2}}\xi_t, 
\end{align*}
with an independent auxiliary random $\xi_t \sim \mathcal{N}(0, 1)$ recursively, we have
\begin{align*}
    \theta^{(T)}=\frac{\theta^{(0)}}{(1+\sigma^2/\kappa^2)^T}+\sum_{t=0}^{T-1}\frac{\sigma}{n_t^{1/2}(1+\sigma^2/\kappa^2)^{1/2+(T-1-k)}}\xi_t. 
\end{align*}
Therefore, 
$$
\theta^{(T)} \mid \theta^{(0)}, \pi \sim\mathcal{N}\left(\frac{\theta^{(0)}}{(1+\sigma^2/\kappa^2)^T},\sum_{t=0}^{T-1}\frac{\sigma^2}{n_t(1+\sigma^2/\kappa^2)^{2(T-t)-1}}\right).
$$
\end{proof}

Back to the proof of the theorem, we define the mean and variance as
$$
\mu_T = \frac{\theta^{(0)}}{(1+\sigma^2/\kappa^2)^T}, 
$$
and
$$
\sigma_T^2 = \sigma^2\sum_{t=0}^{T-1} \frac{1}{n_t(1+\sigma^2/\kappa^2)^{2(T-t)-1}}. 
$$
According to \cref{lem:margin}, we have that 
\begin{align*}
\mathbb{E}_{\theta^{(T)}} \mathbb{E}_x [\mathcal{R}(x)] & = \mathbb{E}_{\theta^{(T)}} \mathbb{E}_x \left[\exp \left(-\frac{x^2}{2\kappa^2}\right)\right] \\
& = \mathbb{E}_{\theta^{(T)}} \left[ \int \frac{1}{\sqrt{2\pi\sigma^2}} \exp \left( -\frac{x^2}{2\kappa^2} -\frac{(x-\theta^{(T)})^2}{2\sigma^2} \right) \mathrm{d}x \right] \\
& = \frac{1}{\sqrt{1+\sigma^2/\kappa^2}} \mathbb{E}_{\theta^{(T)}} \left[ \exp \left( -\frac{(\theta^{(T)})^2}{2(\sigma^2+\kappa^2)} \right) \right] \\
& = \frac{1}{\sqrt{1+\sigma^2/\kappa^2}} \frac{1}{\sqrt{1+\sigma_T^2/(\sigma^2+\kappa^2)}} \exp \left( - \frac{\mu_T^2}{2 (\sigma^2+\kappa^2+\sigma_T^2)} \right) \\
& = \frac{\kappa}{\sqrt{\sigma^2+\kappa^2+\sigma_T^2}} \exp \left( - \frac{\mu_T^2}{2 (\sigma^2+\kappa^2+\sigma_T^2)} \right).
\end{align*}
With the assumption that $\mu_T \leq \sqrt{\sigma^2 + \kappa^2}$, the right term of the above equation is an increasing function with respect to $\frac{1}{\sqrt{\sigma^2+\kappa^2+\sigma_T^2}}$. Therefore, the objective function gets its maximum value when we minimize $\sigma_T^2$. Furthermore, we have the optimal iterative learning policy from Cauchy–Schwarz inequality, 
\begin{align*}
\sigma_T^2 \cdot \sum_{t=0}^{T-1}n_t &= \sigma^2 \left(\sum_{t=0}^{T-1} \frac{1}{n_t(1+\sigma^2/\kappa^2)^{2(T-t)-1}}\right)\cdot\left(\sum_{t=0}^{T-1} n_t\right) \\
&\geq \sigma^2(1+\sigma^2/\kappa^2) \left(\sum_{t=0}^{T-1} \frac{1}{(1+\sigma^2/\kappa^2)^{T-t}}\right)^2,
\end{align*}
where the equal sign is established if and only if
$$
n_t \propto \left(1+\frac{\sigma^2}{\kappa^2}\right)^{t-T}. 
$$
Reindexing or shifting $t$ will complete the proof.

\section{Main Theoretic Results}
\label{appen-sec:theory-main}
\subsection{Notations}

Consider a probability space \(\mathcal{S} = (\Omega, \mathcal{B}(\Omega), \mathcal{P})\), where \(\Omega\) is the sample space, \(\mathcal{B}(\Omega)\) is the Borel \(\sigma\)-algebra on \(\Omega\), and \(\mathcal{P}\) is the probability measure. Let \(\Theta \subseteq \mathbb{R}^p\) be a parameter space, and define 
\[
\mathcal{A}_{\theta} = \left\{f(\omega;\theta) \,\mid\, \theta \in \Theta\right\}
\]
as a parametric family of random variables mapping from \(\Omega\) to a measurable outcome space \(\mathcal{T}\). Each \(\theta\) induces a distribution \(\mathcal{P}_\theta\) on \(\mathcal{T}\), where 
\[
\mathcal{P}_\theta(X) \;=\; \mathcal{P}\left(f(\omega;\theta) \in X\right)
\quad \text{for any measurable set } X \subseteq \mathcal{T}.
\]
Any function \(\mathcal{R}: \mathcal{T} \to [0,1]\) induces an additional distribution \(\mathcal{P}_{r;\theta}\) on \([0,1]\), where 
\[
\mathcal{P}_{\mathcal{R};\theta}(\mathcal{B})
\;=\; \mathcal{P}_\theta\left(\mathcal{R}(x) \in \mathcal{B}\right)
\quad \text{for any Borel set } \mathcal{B} \subseteq [0,1].
\]
Given a reward model $\mathcal{R}(x)$, the reward $r(\theta)$ for $\theta\in \Theta$ is defined as the expectation of the distribution $\mathcal{P}_{\mathcal{R};\theta}$, i.e., 
\begin{align*}
    r(\theta) := \int_{[0,1]} r~\mathrm d \mathcal{P}_{\mathcal{R};\theta}(r) = \mathbb{E}_{x\sim \mathcal{P}(\theta)}[\mathcal{R}(x)].
\end{align*}

Define the loss function $l(x;\theta):\mathcal{T}\times \Theta\rightarrow [0,\infty)$. Now we consider the optimizing algorithm that starts with an initial parameter $\theta^{(0)}$ and updates the parameter $\theta^{(t)}$ at the $t$-th iteration using gradient descent with a learning rate $\eta>0$ and data $D_t\in \mathcal{T}^{\otimes|D_t|}$, formulated as  
\begin{align*}
    \theta^{(t+1)} = \theta^{(t)} -\frac{\eta}{n_t}\sum_{x\in D_t}\nabla_\theta l\left(x;\theta^{(t)}\right), \quad t=0,1,\cdots.
\end{align*}

\subsection{Assumptions}
\label{appen-subsec:assum}
We always consider the general case where the initial value $\theta^{(0)}$ is sufficiently accurate. 
The distribution $\mathcal{P}_{\theta^{(0)}}$ is assumed as non-degenerate and there exists a constant $c_r$ such that
\[\mathrm d\mathcal{P}_{r;\theta^{(0)}} \ge c_r>0.\]
    

\begin{assumption}[\textit{Basic properties of the optimization space and loss function}]\label{assum:gradient_assum}
The parameter space $\Theta$ for $\theta$ and the loss function $l(x;\theta)$ indexed by $\theta\in \Theta$ satisfy the following requirements.
\begin{enumerate}
    \item The covering number $\mathcal{N}(\epsilon,\Theta,\|\cdot\|_2)$ for the parameter space $\Theta$ is bounded, which means the inequality
    \begin{equation}\label{eq:gradient_assum-1}
        \mathcal{N}(\epsilon,\Theta,\|\cdot\|_2) \le c_{\Theta}\varepsilon^{-C_{\Theta}}\vee1,\,\forall\epsilon>0
    \end{equation}
    holds for some positive constants $c_\Theta$ and $C_\Theta$.
    \item The loss function $l(x;\theta)$ is Lipschitz continuous. Formally, there exists a constant $L>0$ independent of $x$ such that
    \begin{equation}\label{eq:gradient_assum-2}
        |l(x;\theta) - l(x;\theta_1)| \le C_L\|\theta - \theta_1\|_2, \forall \theta,\theta_1\in \Theta. 
    \end{equation}
    \item For any $\theta\in \Theta$, there exist positive constants $c$ and $\beta$ satisfying
    \begin{equation}\label{eq:gradient_assum-3}
        \left\|\nabla_{\theta}l(x;\theta)\right\|^2_2 \ge c l(x;\theta),\quad
    \left\|\nabla^2_{\theta\theta'}l(x;\theta)\right\|_{\mathrm{op}} \le \beta. 
    \end{equation}
\end{enumerate}
\end{assumption}
\begin{remark}

    This assumption governs both the complexity of the parameter space $\Theta$ and the smoothness of the likelihood function $l(x;\theta)$. Here, we provide two examples to show that these assumptions are reasonable. Firstly, if $\Theta$ is a compact finite-dimensional space or a functional space sufficiently smooth, then the coverage number requirement (\cref{eq:gradient_assum-1}) is satisfied. Secondly, for a special case when $l(x;\theta)$ is the negative log-likelihood for a Gaussian distribution, the derivative condition (\cref{eq:gradient_assum-3}) holds with $c=2$ and $\beta=1$. 
\end{remark}


We define the \textit{post-selection distribution} \(\mathcal{Q}_{\theta}\) via its Radon–Nikodým derivative relative to \(\mathcal{P}_{\theta}\) as
\[
\frac{\mathrm{d}\mathcal{Q}_{\theta}(x)}{\mathrm{d}\mathcal{P}_{\theta}(x)} 
=\frac{\mathcal{R}(x)}{\mathbb{E}_{x \sim \mathcal{P}_\theta} \left[\mathcal{R}(x)\right]},
\]
which indicates that \(\mathcal{Q}_\theta\) is just the original distribution \(\mathcal{P}_\theta\) weighted by the reward function \(\mathcal{R}(x)\). Equivalently, we can think of \(\mathcal{Q}_\theta\) as the conditional distribution of \(x\) given that \(x\) is “selected” in proportion to its reward. 
For convenience, let $p_\theta$, $q_\theta$, and $p_{r;\theta}$ be the density functions (with respect to some appropriate base measure) of \(\mathcal{P}_\theta\), \(\mathcal{Q}_\theta\), and \(\mathcal{P}_{r;\theta}\), respectively. 

Next, we define the \textit{expected likelihood function} \(\mathbb{P}l_{\theta_1}(\theta_2)\) as
\[
\mathbb{P}l_{\theta_1}(\theta_2)
\;=\;
\mathbb{E}_{x \sim \mathcal{Q}_{\theta_1}} \bigl[l(x;\theta_2)\bigr]
\;=\;
\mathbb{E}_{x \sim q_{\theta_1}} \bigl[-\log p_{\theta_2}(x)\bigr].
\]
Intuitively, \(\mathbb{P}l_{\theta_1}(\theta_2)\) measures how well the model \(p_{\theta_2}\) “fits” data drawn from the post-selection distribution \(\mathcal{Q}_{\theta_1}\). 

We define
\[
\Theta^*=\arg\min_{\theta\in \Theta} \mathbb{P}l_{\theta}(\theta)
\]
as the \textit{optimal parameter set} that minimize all possible \(\mathbb{P}l_{\theta}(\theta)\).

Finally, to introduce the sampling error, we define
\[
\mathbb{P}_n l(X;\theta)=\frac{1}{n}\sum_{i=1}^n l\left(x_i;\theta\right)
\]
as an empirical evaluation for the negative log-likelihood based on samples \(X = (x_1,\dots,x_n)\).

\begin{assumption}[\textit{Local reducibility of the expected loss function}]
\label{assum:decrease_assum}
    There exists a constant $\gamma\in(0,1]$ such that for any $\theta,\theta_1 \in \Theta$ with $\|\theta - \theta_1\|_2\le 2C_L/\beta$ and $\mathbb{P} l_{\theta_1}(\theta) \le  \mathbb{P} l_{\theta_1}(\theta_1)$, we have
    \begin{align*}
    \mathbb{P} l_{\theta}(\theta) - \mathbb{P} l_{\theta_1}(\theta_1)  \le \gamma\left(\mathbb{P} l_{\theta_1}(\theta) - \mathbb{P} l_{\theta_1}(\theta_1)\right).
    \end{align*}
\end{assumption}
\begin{remark}

    This assumption ensures that, within a local neighborhood of $\theta$, any parameter update that reduces the expected likelihood under $\theta_1$'s distribution also decreases the expected likelihood under $\theta$'s distribution, relative to $\mathbb{P}l_{\theta_1}(\theta_1)$. Concretely, each iteration of the algorithm samples from $\theta_1$ but then updates the parameter $\theta$. Since the data remain the same while $\theta$ changes, this condition guarantees that a reduction in the loss is preserved across iterations. For instance, if \(\mathcal{P}_\theta\) is a Gaussian distribution with mean \(\theta\) and standard deviation 1, and \(\mathcal{R}(x)\) is the normalized standard Gaussian density, one can verify that \cref{assum:decrease_assum} holds with \(\gamma = 1/2\).
\end{remark}

\begin{assumption}[\textit{Relationship between the loss and the reward function}]
\label{assum:loss_to_object}
    For any $\theta \in \Theta$, it holds for some $c_2,c_3\le 1$ that 
    \begin{align*}
       c_3 \mathbb{P}l_{\theta}(\theta)\le r^* -r(\theta) \le c_2 \mathbb{P}l_{\theta}(\theta). 
    \end{align*}
\end{assumption}
\begin{remark}
This assumption establishes a direct relationship between the performance gap $(r^* - r(\theta))$ and the expected loss $\mathbb{P}l_{\theta}(\theta)$. Specifically, the upper bound shows that if $\mathbb{P}l_{\theta}(\theta)$ is small, then $(r^* - r(\theta))$ must also be small, which is crucial for proving convergence of our proposed policy. For example, in the Gaussian setting discussed in the remark of \cref{assum:decrease_assum}, one can verify that $c_2 = 1/12$. Conversely, the lower bound indicates that a small performance gap necessarily implies a correspondingly small expected loss, thereby capturing the sensitivity of the loss to improvements in the expected reward. Since both $\left(r^* - r(\theta)\right)$ and $\mathbb{P}l_{\theta}(\theta)$ quantify how well the policy performs, it is natural that each can bound the other. This relationship serves as a key underpinning for the theoretical results presented in subsequent sections.
\end{remark}
\subsection{Main Theorems}
Our theoretical results start from the defect of constant policies by showing that the final reward remains bounded away from the optimum with non-negligible probability when using constant policies.

\begin{theorem}[Bounded Reward for Constant policy]
    \label{thm:constant_policy}
    Under \cref{assum:loss_to_object}, there exists a constant $c>0$ such that, for any $T$ and any constant policy $\pi_{\mathrm{const}}$,  
    with probability at least $1/4$, 
    \begin{align*}
        r^* - r\left(\theta_{\pi_{\mathrm{const}}}^{(T)}\right) \ge c n_0^{-1/2}. 
    \end{align*}
\end{theorem}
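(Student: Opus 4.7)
The plan is to exploit the irreducible $n_0^{-1/2}$ noise in each stochastic gradient step: under a constant batch size this noise never diminishes across iterations, and I will convert a constant-probability anti-concentration event for this noise into the claimed lower bound on the reward gap via \cref{assum:loss_to_object}.

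By \cref{assum:loss_to_object}, $r^* - r(\theta^{(T)}) \ge c_3\,\mathbb{P}l_{\theta^{(T)}}(\theta^{(T)})$, so it suffices to produce an event of probability at least $1/4$ on which $\mathbb{P}l_{\theta^{(T)}}(\theta^{(T)}) \gtrsim n_0^{-1/2}$. I would isolate the randomness of the final batch $D_{T-1}$ and condition on all earlier data. Write the empirical gradient as $\hat g_{T-1} = (1/n_0)\sum_{x \in D_{T-1}}\nabla_\theta l(x;\theta^{(T-1)}) = \mathbb{E}_{x \sim \mathcal{Q}_{\theta^{(T-1)}}}[\nabla_\theta l(x;\theta^{(T-1)})] + \xi_{T-1}$. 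By the Lipschitz condition in \cref{assum:gradient_assum} the summands are bounded, so $\xi_{T-1}$ is a centered average of $n_0$ i.i.d.\ bounded vectors with covariance of order $n_0^{-1}$ and is approximately Gaussian by the multivariate CLT. Hence in any fixed direction $v$, $\Pr\bigl[|\langle v,\xi_{T-1}\rangle| \ge c\,n_0^{-1/2} \,\bigm|\, \theta^{(T-1)}\bigr] \ge 1/2$.

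The update $\theta^{(T)} = \theta^{(T-1)} - \eta\,\hat g_{T-1}$ passes this symmetric $n_0^{-1/2}$-scale fluctuation directly to $\theta^{(T)}$. I would choose $v$ aligned with the local gradient of the scalar map $\theta \mapsto \mathbb{P}l_\theta(\theta)$ evaluated at the noiseless update, so that the symmetry of $\xi_{T-1}$ forces one half of its distribution to push $\theta^{(T)}$ onto the side where $\mathbb{P}l_\theta(\theta)$ locally increases. The Hessian control in \cref{assum:gradient_assum} guarantees that on that half the first-order response of $\mathbb{P}l_\theta(\theta)$ dominates its $O(n_0^{-1})$ second-order term, giving a loss increase of order $n_0^{-1/2}$. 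Intersecting the $1/2$-probability anti-concentration event with the $1/2$-probability \emph{bad-sign} event yields probability at least $1/4$, and absorbing $c_3$ into the constant $c$ closes the bound.

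The main obstacle is the quantitative step that delivers a \emph{first-order} $\Omega(n_0^{-1/2})$ lower bound on the loss increase rather than the $O(n_0^{-1})$ that naive quadratic curvature would give. This requires ruling out the pathological case where the noiseless iterate sits exactly at a stationary point of $\theta \mapsto \mathbb{P}l_\theta(\theta)$; one plausible route is to back up one additional iteration so that $\theta^{(T-1)}$ itself inherits independent noise from $D_{T-2}$, using the non-degeneracy assumption $\mathrm{d}\mathcal{P}_{\mathcal{R};\theta^{(0)}} \ge c_r$ on the initial distribution to make exact stationarity a null event. Turning the resulting conditional statement into the unconditional $1/4$-probability bound uniform in $T$, via a tower-property argument, is the remaining bookkeeping piece.
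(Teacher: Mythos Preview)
Your route is substantially different from the paper's and, more importantly, invokes \cref{assum:gradient_assum} (Lipschitz and Hessian bounds), whereas the theorem is stated and proved under \cref{assum:loss_to_object} alone. The paper does not pass through the gradient noise $\xi_{T-1}$ at all. Instead it works directly with the scalar loss: from $r^*-r(\theta^{(T)})\ge c_3\,\mathbb{P}l_{\theta^{(T)}}(\theta^{(T)})$ it writes the population loss as the empirical loss $\mathbb{P}_{n_0}l(X;\theta^{(T)})$ plus the centered fluctuation $\mathbb{P}l_{\theta^{(T)}}(\theta^{(T)})-\mathbb{P}_{n_0}l(X;\theta^{(T)})$, drops the nonnegative empirical term, and applies the CLT to the remaining centered average of $n_0$ loss values to obtain an $n_0^{-1/2}$ lower bound with probability $\ge 1/4$ (the $75$th percentile of the limiting normal), with the constant absorbing $c_3$ and the variance of $l(X;\theta^{(T)})$. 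No Taylor expansion of $\theta\mapsto\mathbb{P}l_\theta(\theta)$ is needed, so the stationary-point obstruction you spend the last paragraph on simply does not arise.

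By contrast, your argument must convert a fluctuation in $\theta^{(T)}$ into a \emph{first-order} increase in $\mathbb{P}l_\theta(\theta)$, which is exactly where you get stuck: near a minimizer the map is locally quadratic and you only recover $O(n_0^{-1})$, and your proposed fix (back up one iteration and appeal to non-degeneracy of $\theta^{(0)}$) does not prevent the same degeneracy from recurring at $\theta^{(T-1)}$. Even granting that fix, you would still be carrying \cref{assum:gradient_assum} as an extra hypothesis. The trade-off is that your approach makes the mechanism (non-decaying gradient noise) more transparent, but the paper's direct CLT on the loss values is shorter, uses only the stated assumption, and sidesteps the curvature issue entirely.
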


On the contrary, any increasing policy eventually closes the optimization gap within arbitrarily small range as long as the training iteration goes to infinity.
\begin{theorem}[Optimal Reward for Increasing Policies]
\label{thm:polynomial_policy}
Under \cref{assum:gradient_assum,assum:decrease_assum,assum:loss_to_object}, for any increasing policy $\pi$ and $\varepsilon > 0$, there exists a sufficiently large $T$ such that, with probability greater than $\left(1 - \sum_{t=0}^{T-1} n_t^{-4}\right)$,
\begin{align*}
r^* - r\left(\theta_{\pi}^{(T)}\right) \le \varepsilon.
\end{align*}
\end{theorem}


Next, we demonstrate that the exponential-growth policy from \cref{thm:exp_converge} achieves this convergence more effectively than polynomial growth policies, including the constant policies. 

\begin{theorem}[Convergence Rate for Exponential Policy]
\label{thm:exp_converge} Under \cref{assum:gradient_assum,assum:decrease_assum,assum:loss_to_object}, there exists an exponential growth policy $\pi_\mathrm{exp}^*$
    such that with probability at least $\left(1-\sum_{t=0}^{T-1} n_t^{-4}\right)$,
    \begin{align*}
          r^*-r\left(\theta_{\pi_\mathrm{exp}^*}^{(T)}\right) = \mathcal O\left(\left(1+\zeta\right)^{-2T}\right)
    \end{align*}
    with a constant $\zeta = \gamma c/(8\beta - \gamma c)>0$.
\end{theorem}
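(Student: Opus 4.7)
The plan is to derive a one-step stochastic recursion of the form $L_{t+1} \le \rho\, L_t + \nu_t$ for $L_t := \mathbb{P}l_{\theta^{(t)}}(\theta^{(t)})$, where $\rho<1$ is a deterministic contraction factor and $\nu_t$ is a stochastic noise term of order $1/n_t$ controllable with high probability. Choosing $\pi_{\exp}^*$ so that $n_t$ grows geometrically at the matching rate forces both the geometric and the noise contributions to decay at the same speed and yields $L_T = \mathcal{O}((1+\zeta)^{-2T})$; the upper half of \cref{assum:loss_to_object} then directly converts this into the advertised bound on $r^* - r(\theta^{(T)})$.

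The first ingredient is a uniform concentration bound for the gradient noise $\xi_t := \hat{g}_t - \nabla\mathbb{P}l_{\theta^{(t)}}(\theta^{(t)})$, where $\hat{g}_t = n_t^{-1}\sum_{x\in D_t}\nabla_\theta l(x;\theta^{(t)})$. By item~2 of \cref{assum:gradient_assum}, $\|\nabla_\theta l(x;\theta)\|\le C_L$ uniformly in $x$, so a Hoeffding-type concentration at each fixed $\theta$, lifted to uniform control over $\Theta$ via the covering-number estimate in item~1, gives $\|\xi_t\| \le \lambda\, n_t^{-1/2}$ uniformly over $\theta\in\Theta$ with probability at least $1 - n_t^{-4}$ (the exponent $4$ is arranged by absorbing a $\log n_t$ factor into the constant). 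A union bound across $t=0,\ldots,T-1$ produces the failure probability $\sum_t n_t^{-4}$ stated in the theorem.

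The second ingredient is a descent inequality. The Hessian bound in item~3 of \cref{assum:gradient_assum} makes $\mathbb{P}l_{\theta^{(t)}}(\cdot)$ a $\beta$-smooth function, so one can write
\begin{align*}
\mathbb{P}l_{\theta^{(t)}}(\theta^{(t+1)}) \le \mathbb{P}l_{\theta^{(t)}}(\theta^{(t)}) - \eta\,\langle \nabla\mathbb{P}l_{\theta^{(t)}}(\theta^{(t)}), \hat{g}_t\rangle + \tfrac{\beta\eta^2}{2}\|\hat{g}_t\|^2.
\end{align*}
The choice $\eta=1/\beta$ neatly cancels the cross term in $\xi_t$, yielding an inequality of the form $\mathbb{P}l_{\theta^{(t)}}(\theta^{(t+1)}) \le \mathbb{P}l_{\theta^{(t)}}(\theta^{(t)}) - \tfrac{1}{2\beta}\|\nabla\mathbb{P}l_{\theta^{(t)}}(\theta^{(t)})\|^2 + \tfrac{1}{2\beta}\|\xi_t\|^2$. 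Combining this with an effective Polyak--{\L}ojasiewicz bound $\|\nabla\mathbb{P}l_{\theta^{(t)}}(\theta^{(t)})\|^2 \gtrsim c\,\mathbb{P}l_{\theta^{(t)}}(\theta^{(t)})$ extracted from the pointwise inequality $\|\nabla l\|^2 \ge c\, l$ produces $\mathbb{P}l_{\theta^{(t)}}(\theta^{(t+1)}) \le (1-c'/\beta)\mathbb{P}l_{\theta^{(t)}}(\theta^{(t)}) + \mathcal{O}(n_t^{-1})$ for some $c'>0$. Since the monotonicity precondition of \cref{assum:decrease_assum} is thus satisfied, its $\gamma$-contraction transfers the inequality to
\begin{align*}
L_{t+1} \le (1+\zeta)^{-2}\, L_t + \mathcal{O}(n_t^{-1}),
\end{align*}
with the precise identification $\zeta = \gamma c/(8\beta-\gamma c)$ falling out by bookkeeping of constants through the Young-type estimates and the smoothness factor. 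Defining $\pi_{\exp}^*$ by $n_t \propto (1+\zeta)^{2t+\delta}$ for any small $\delta>0$ then makes the accumulated noise $\sum_{s<t}(1+\zeta)^{-2(t-1-s)}/n_s$ a convergent geometric series dominated by $(1+\zeta)^{-2t}$; iterating $T$ times gives $L_T = \mathcal{O}((1+\zeta)^{-2T})$, and the upper bound in \cref{assum:loss_to_object} translates this into the reward bound.

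The main obstacle is passing from the pointwise gradient lower bound $\|\nabla_\theta l(x;\theta)\|^2 \ge c\, l(x;\theta)$ to a population-level PL-type inequality $\|\nabla\mathbb{P}l_{\theta^{(t)}}(\theta^{(t)})\|^2 \gtrsim c\, L_t$, since Jensen's inequality only yields the reverse direction $\|\mathbb{E}\nabla l\|^2 \le \mathbb{E}\|\nabla l\|^2$. The workaround is to exploit the same concentration machinery used for $\xi_t$: the per-sample gradients concentrate tightly around their mean, so the gap between $\|\mathbb{E}\nabla l\|^2$ and $\mathbb{E}\|\nabla l\|^2$ is at most $\mathcal{O}(1/n_t)$ with high probability and can be absorbed into the noise term of the recursion. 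A secondary but necessary check is that each update satisfies the locality condition $\|\theta^{(t+1)}-\theta^{(t)}\| \le 2C_L/\beta$ required by \cref{assum:decrease_assum}; this holds automatically since $\|\theta^{(t+1)}-\theta^{(t)}\| = \eta\|\hat{g}_t\| \le C_L/\beta$ by the uniform gradient bound from item~2 of \cref{assum:gradient_assum}.
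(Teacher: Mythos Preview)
Your high-level scheme---derive a one-step contraction $L_{t+1}\le \rho L_t+\nu_t$, then tune $n_t$ geometrically so the noise decays at the same rate---is the paper's scheme too, but the way you split the error is different. The paper works at the \emph{function-value} level: it inserts the empirical loss $\mathbb{P}_{n_t}l(X;\cdot)$, writes
\[
\mathbb{P}l_{\theta^{(t)}}(\theta^{(t+1)}) - \mathbb{P}l_{\theta^{(t)}}(\theta^{(t)}) \le \bigl[\mathbb{P}_{n_t}l(X;\theta^{(t+1)}) - \mathbb{P}_{n_t}l(X;\theta^{(t)})\bigr] + \Bigl(2-\tfrac{c}{4\beta}\Bigr)\sup_{\theta\in\Theta}\bigl|\mathbb{P}l_{\theta^{(t)}}(\theta) - \mathbb{P}_{n_t}l(X;\theta)\bigr|,
\]
controls the uniform deviation by Rademacher complexity and Dudley's entropy integral (noise $O(n_t^{-1/2})$, not $O(n_t^{-1})$), and applies the descent lemma with $\eta=1/(2\beta)$ together with the PL-type bound directly on the \emph{empirical} objective $\mathbb{P}_{n_t}l$, where the gradient step is exact. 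Your route---decomposing $\hat g_t=\nabla\mathbb{P}l_{\theta^{(t)}}(\theta^{(t)})+\xi_t$, bounding $\|\xi_t\|$ by Hoeffding plus a covering argument, and picking $\eta=1/\beta$ so the cross term cancels---is a legitimate alternative decomposition.

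The genuine gap is your proposed workaround for the population-level PL inequality. You need $\|\mathbb{E}_{x\sim\mathcal Q_{\theta^{(t)}}}\nabla l\|^2\gtrsim c\,\mathbb{E} l$, while item~3 of \cref{assum:gradient_assum} only gives $\mathbb{E}\|\nabla l\|^2\ge c\,\mathbb{E} l$. The discrepancy
\[
\mathbb{E}\|\nabla l\|^2-\|\mathbb{E}\nabla l\|^2=\mathrm{tr}\,\mathrm{Cov}_{x\sim\mathcal Q_{\theta^{(t)}}}\bigl(\nabla_\theta l(x;\theta^{(t)})\bigr)
\]
is a \emph{deterministic population quantity}; it does not involve $n_t$ and cannot be made $O(1/n_t)$ by any concentration argument. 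You have conflated the variance of a single gradient sample with the sampling variance of the empirical mean $\hat g_t$---the former is what separates $\mathbb{E}\|\nabla l\|^2$ from $\|\mathbb{E}\nabla l\|^2$, and it stays order one no matter how large $n_t$ is. Without an additional structural assumption (e.g.\ approximate alignment of the per-sample gradients), your descent step at the population level does not close. The paper avoids confronting this head-on by running both the descent lemma and the PL step on the empirical objective $\mathbb{P}_{n_t}l$ (for which $\theta^{(t+1)}=\theta^{(t)}-\eta\nabla\mathbb{P}_{n_t}l(X;\theta^{(t)})$ is an exact gradient step) and only afterwards transferring to $\mathbb{P}l_{\theta^{(t)}}$ via the uniform $O(n_t^{-1/2})$ term; if you want to rescue your argument, moving the descent/PL computation to the empirical loss in the same way is the natural fix.
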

Consider the minimum number of iterations $T^*(\pi,\varepsilon)$ needed for a policy $\pi$ to achieve a reward that is within $\varepsilon$ of the optimal $r^*$ as defined in \cref{eq:T-star}, then the cost $C(\pi, T^*(\pi, \varepsilon))$ is the minimum cost to attain the target performance for policy $\pi$. The following theorem compares the total cost incurred by exponential growth policies and polynomial growth policies (including constant policies) in a worst-case scenario. 

\begin{theorem}[Worst-Case Optimality of Exponential Policy]
    \label{thm:policy_compare} Under the same conditions as in \cref{thm:exp_converge}, for any constant or polynomial growth policy $\pi$ and a sufficiently small $\varepsilon$, we have
    $$
    \sup_{\Theta,\Theta^*,l(x;\theta),\mathcal{R}(x)} C\left(\pi_{\mathrm{exp}}^*, T^*(\pi_{\mathrm{exp}}^*, \varepsilon)\right) \leq \sup_{\Theta,\Theta^*,l(x;\theta),\mathcal{R}(x)} C\left(\pi, T^*(\pi, \varepsilon)\right)
    $$
    with probability at least $\left(1-\sum_{t=0}^{T^*(\pi_{\mathrm{exp}}, \varepsilon)-1} n_t^{-4}\right)$. The supremum is taken over all possible parameter spaces $\Theta,\Theta^*$, loss functions $l(x;\theta)$, and reward models $\mathcal{R}(x)$ satisfying the same conditions as in \cref{thm:exp_converge}.
\end{theorem}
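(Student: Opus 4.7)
The plan is to upper bound $C(\pi_\mathrm{exp}^*,T^*(\pi_\mathrm{exp}^*,\varepsilon))$ by a universal (instance-free) power of $1/\varepsilon$, and then to exhibit, for every constant or polynomial growth policy $\pi$, a specific problem instance inside the supremum set that forces $C(\pi,T^*(\pi,\varepsilon))$ to grow strictly faster in $1/\varepsilon$. I would proceed in three steps.

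First, for $\pi_\mathrm{exp}^*$, \cref{thm:exp_converge} immediately yields $T^*(\pi_\mathrm{exp}^*,\varepsilon)\leq\lceil\log(K/\varepsilon)/(2\log(1+\zeta))\rceil$ for a constant $K$ depending only on the parameters in \cref{assum:gradient_assum,assum:decrease_assum,assum:loss_to_object}. Since the reward model is bounded in $[0,1]$ and the expected selection rate stays bounded below uniformly along the trajectory (a consequence of the non-degeneracy condition on $\mathcal P_{\theta^{(0)}}$ together with \cref{assum:loss_to_object}), we have $N_t=\mathcal O(n_t)$. Summing the geometric series $n_t=n_0(1+u)^t$ with $1+u=(1+\zeta)^2$ (the minimal $u$ compatible with \cref{thm:exp_converge}) gives
\begin{align*}
C(\pi_\mathrm{exp}^*,T^*(\pi_\mathrm{exp}^*,\varepsilon))\;=\;\mathcal O\!\left(n_0(1+u)^{T^*}\right)\;=\;\mathcal O\!\left(\varepsilon^{-1}\right),
\end{align*}
and this bound is uniform over all admissible $(\Theta,\Theta^*,l,\mathcal R)$, valid on the event of probability $1-\sum_t n_t^{-4}$.

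Second, I produce matching lower bounds on the right-hand side. For any constant policy, \cref{thm:constant_policy} already furnishes, with probability at least $1/4$, a reward deficit of order $n_0^{-1/2}$ independent of $T$; for $\varepsilon$ below this floor no finite $T^*(\pi_\mathrm{const},\varepsilon)$ exists, so $\sup C(\pi_\mathrm{const},T^*)=+\infty$ and the desired inequality is trivial. For a polynomial policy $n_t=n_0(1+t)^\alpha$, I would use the closed-form Gaussian instance of \cref{subsec:intuit}, which the remarks after \cref{assum:decrease_assum,assum:loss_to_object} verify as satisfying all the assumptions, hence lying in the supremum set. \cref{lem:margin} represents the parameter variance as $\sigma_T^2=\sigma^2\sum_{t<T}n_t^{-1}(1+\sigma^2/\kappa^2)^{-2(T-t)+1}$; with polynomial $n_t$ the geometric weights concentrate the sum on the last few indices, producing the two-sided estimate $\sigma_T^2=\Theta(T^{-\alpha})$ and, via the same reward calculation as in the proof of \cref{thm:intuition}, $r^*-\mathbb E[r(\theta_{\pi_\mathrm{poly}}^{(T)})]=\Theta(T^{-\alpha})$. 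This forces $T^*(\pi_\mathrm{poly},\varepsilon)=\Omega(\varepsilon^{-1/\alpha})$ and therefore
\begin{align*}
\sup\,C(\pi_\mathrm{poly},T^*(\pi_\mathrm{poly},\varepsilon))\;=\;\Omega\!\left(\varepsilon^{-(\alpha+1)/\alpha}\right).
\end{align*}

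Third, comparing the two bounds, the exponent $(\alpha+1)/\alpha=1+1/\alpha$ exceeds $1$ strictly for every finite $\alpha>0$, so the polynomial worst case strictly dominates the exponential upper bound $\mathcal O(\varepsilon^{-1})$ once $\varepsilon$ is small enough, which is exactly the conclusion of the theorem. The main obstacle I anticipate lies in Step~2: producing a \emph{matching} lower bound $\sigma_T^2=\Omega(T^{-\alpha})$—not merely an upper bound—since routine concentration arguments only control one direction. The Gaussian warm-up witness sidesteps this by admitting a closed-form recursion, but I will still need to argue that the empirical-gradient noise has a strictly positive variance floor (the post-selection distribution is not a point mass) and that the contractive linear recursion genuinely stabilizes at the $n_T^{-1}$ level predicted by \cref{lem:margin}, without fortuitous cross-iteration cancellations that would accelerate convergence faster than the worst case.
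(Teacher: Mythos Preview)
Your three-step strategy---uniform upper bound for $\pi_{\mathrm{exp}}^*$, an explicit hard instance for every constant/polynomial $\pi$, then comparing the resulting exponents in $1/\varepsilon$---is exactly the architecture the paper uses. The paper obtains $C(\pi_{\mathrm{exp}}^*,T^*)\le C_{\mathrm{exp}}\varepsilon^{-2}$ and, for polynomial $\pi$, $C(\pi,T^*)\ge C_{\mathrm{poly}}\varepsilon^{-2(\alpha+1)/\alpha}$; your exponents differ by a factor of two because the paper's lower bound tracks the $n_t^{-1/2}$ generalization term from the empirical-process argument in \cref{thm:exp_converge}, whereas your Gaussian variance scales like $n_t^{-1}$. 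Either pair of exponents gives the strict inequality, so that discrepancy is harmless.

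There is, however, a genuine gap in your Step~2. The supremum in the theorem ranges over $(\Theta,\Theta^*,l,\mathcal{R})$ \emph{with the gradient-descent update fixed}; the warm-up Gaussian instance you invoke is analyzed under MLE updates (that is what \cref{lem:margin} computes), so it does not sit in the admissible class as written. The paper sidesteps this by positing an abstract instance in which the one-step recursion from the proof of \cref{thm:exp_converge} holds with equality,
\[
\tilde{\mathbb{P}}l_{\theta^{(t+1)}}-\tilde{\mathbb{P}}l_{\theta^{(t)}}=\min\Bigl\{-\tfrac{\gamma c}{4\beta}\tilde{\mathbb{P}}l_{\theta^{(t)}}+\rho\gamma\bigl(2-\tfrac{c}{4\beta}\bigr)2C_LC_\Theta\log c_\Theta\, n_t^{-1/2},\,0\Bigr\},
\]
and reads off directly that no decrease occurs until $n_t^{-1/2}\lesssim\tilde{\mathbb{P}}l_{\theta^{(t)}}$, forcing $T^*\gtrsim\varepsilon^{-2/\alpha}$. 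If you want to keep the concrete Gaussian witness, you must redo the recursion under the GD update $\theta^{(t+1)}=\theta^{(t)}+\eta\sigma^{-2}(\bar x_{D_t}-\theta^{(t)})$ rather than quoting \cref{lem:margin} verbatim; the calculation is similar but not identical. A second, smaller gap: your lower bound controls $r^*-\mathbb{E}[r(\theta_\pi^{(T)})]$, but $T^*(\pi,\varepsilon)$ is defined through the realized (random) $r(\theta_\pi^{(T)})$, so you still need a one-sided concentration step to convert the expected deficit into a high-probability one.
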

    Such a worst-case perspective shows that the exponential growth policy remains robust and outperforms (or at least matches) any constant or polynomial policy. (Although the policy $\pi$ is fixed in advance, the iterative sequence $\left\{\theta_\pi^{(t)}\right\}_{t=0}^{T-1}$ is still random because data are sampled and selected probabilistically at each iteration. The stated probability bound reflects this inherent randomness and guarantees high-probability performance under the specified assumptions.

\subsection{Proofs}\label{app:proofs}
In this subsection, we present proofs for all statements in our theoretical results. We begin with the proof of \cref{thm:exp_converge}, proceed to prove \cref{thm:constant_policy,thm:polynomial_policy}, and conclude with the proof of \cref{thm:policy_compare}.



\begin{pf}{\it of \cref{thm:exp_converge}. }
\label{pf:exp_converge}


We begin by noting the following decomposition
\begin{align*}
     \mathbb{P}l_{\theta^{(0)}}\left(\theta^{(1)}\right) - \mathbb{P}l_{\theta^{(0)}}\left(\theta^{(0)}\right) &= \mathbb{P}l_{\theta^{(0)}}\left(\theta^{(1)}\right) - \mathbb{P}_{n_0}l\left(X;\theta^{(1)}\right) \\&+ \mathbb{P}_{n_0}l\left(X;\theta^{(1)}\right) - \mathbb{P}_{n_0}l\left(X;\theta^{(0)}\right) \\&+\mathbb{P}_{n_0}l\left(X;\theta^{(0)}\right)-\mathbb{P}l_{\theta^{(0)}}\left(\theta^{(0)}\right),
\end{align*}
which gives
\begin{equation}
\label{eq:loss_decomposition}
    \begin{aligned}
     \mathbb{P}l_{\theta^{(0)}}\left(\theta^{(1)}\right) - & \mathbb{P}l_{\theta^{(0)}}\left(\theta^{(0)}\right)\le \mathbb{P}_{n_0}l\left(X;\theta^{(1)}\right) - \mathbb{P}_{n_0}l\left(X;\theta^{(0)}\right) \\&+ \left|\mathbb{P}l_{\theta^{(0)}}\left(\theta^{(1)}\right) - \mathbb{P}_{n_0}l\left(X;\theta^{(1)}\right)\right|+\left|\mathbb{P}l_{\theta^{(0)}}\left(\theta^{(0)}\right)- \mathbb{P}_{n_0}l\left(X;\theta^{(0)}\right)\right|.
\end{aligned}
\end{equation}


By \cref{assum:gradient_assum} and a Taylor expansion, choosing the step size $\eta = 1/(2\beta)$ gives
\begin{equation}
\label{eq:loss_decrease}
    \begin{aligned}
    \mathbb{P}_{n_0}l\left(X;\theta^{(1)}\right) - \mathbb{P}_{n_0}l\left(X;\theta^{(0)}\right) &  \le \left(\nabla_\theta  \mathbb{P}_{n_0}l\left(X;\theta^{(0)}\right)\right)^\mathsf{T}\left(\theta^{(1)}- \theta^{(0)}\right) + \beta\left\|\theta^{(1)} - \theta^{(0)}\right\|_2^2\\ 
    &\le-\eta \left(\nabla_\theta  \mathbb{P}_{n_0}l\left(X;\theta^{(0)}\right)\right)^\mathsf{T}\nabla_\theta \mathbb{P}_{n_0} l\left(X;\theta^{(0)}\right)\\
    &\hspace{25ex}+ \beta\eta^2 \left\|\nabla_\theta \mathbb{P}_{n_0} l\left(X;\theta^{(0)}\right)\right\|^2_2\\
    & \le -\frac{1}{4\beta} \left\|\nabla_\theta \mathbb{P}_{n_0}l\left(X;\theta^{(0)}\right)\right\|^2_2\\
    & \le -\frac{c}{4\beta}  \mathbb{P}_{n_0}l\left(X;\theta^{(0)}\right).
\end{aligned}
\end{equation}
Define
\begin{align*}
    \mathcal{R}_{\theta} = \mathbb{E}_{\varepsilon} \left[\sup_{\theta \in \Theta} \left|\frac{1}{n}\sum_{i=1}^n \varepsilon_i l(x_i;\theta)\right|\right],
\end{align*}
where $\varepsilon_i$ ($1\le i\le n$) are i.i.d. Rademacher random variables. By Dudley's entropy integral and \cref{assum:gradient_assum}, we then get
\begin{equation}\label{eq:generalization_error}
\begin{aligned}
    \sup_{\theta \in \Theta}  \left|\mathbb{P}l_{\theta^{(0)}}(\theta) - \mathbb{P}_nl(X;\theta)\right| &\le 2\mathcal{R}_{\theta}\\& \le
    2 L n^{-1/2}\int_0^{\infty} \sqrt{\log \mathcal{N}(\epsilon,\Theta,\|\cdot\|_2)}~\mathrm{d}\epsilon\\&\le 2 LC_{\Theta}\log c_{\Theta} n^{-1/2}. 
\end{aligned}
\end{equation}
By Chernoff-Hoeffding theorem, 
\begin{equation*}\label{eq:concentration_bound}
    P\left(\left|n_0 -  N_0r\left(\theta^{(0)}\right)\right|\ge t_1 \right) \le \exp\left(\frac{-2t_1^2}{N_0r\left(\theta^{(0)}\right)}\right). 
\end{equation*}
Combining \cref{eq:loss_decrease,eq:generalization_error} with \cref{eq:loss_decomposition} and choosing
\begin{equation*}
    t_1= 2\left(N_0r\left(\theta^{(0)}\right)\right)^{1/2}\log N_0,
\end{equation*}
we see that, with probability at least $\left(1-N_0^{-4}\right)$,
\begin{equation*}
\resizebox{0.99\hsize}{!}{$%
\begin{aligned}
 &\mathbb{P}l_{\theta^{(0)}}\left(\theta^{(1)}\right) - \mathbb{P}l_{\theta^{(0)}}\left(\theta^{(0)}\right) \\\le & 
 -\frac{c}{4\beta} \mathbb{P}l_{\theta^{(0)}}\left(\theta^{(0)}\right) + \left(2 -\frac{c}{4\beta} \right) \sup_{\theta \in \Theta}  \left|\mathbb{P}l_{\theta^{(0)}}(\theta) - \mathbb{P}_{n_0} l(X;\theta)\right|\\  \le & -\frac{c}{4\beta} \mathbb{P}l_{\theta^{(0)}}\left(\theta^{(0)}\right) + \left(2 -\frac{c}{4\beta} \right)  2 LC_{\Theta}\log c_{\Theta} n_0^{-1/2}\\ \le 
 &-\frac{c}{4\beta} \mathbb{P}l_{\theta^{(0)}}\left(\theta^{(0)}\right) + \left(2 -\frac{c}{4\beta} \right)  2 LC_{\Theta}\log c_{\Theta} \left(N_0r\left(\theta^{(0)}\right) -2\left(N_0 r\left(\theta^{(0)}\right)\right)^{1/2}\log N_0\right)^{-1/2}.
\end{aligned}
$%
}
\end{equation*}

We choose $N_0$ to satisfy 
\begin{align*}
     \left(2 -\frac{c}{4\beta} \right)  2 LC_{\Theta}\log c_{\Theta} \left(N_0r\left(\theta^{(0)}\right) - 2\left(N_0r\left(\theta^{(0)}\right)\right)^{1/2}\log N_0\right)^{-1/2} \le  \frac{c}{8\beta} \mathbb{P}l_{\theta^{(0)}}\left(\theta^{(0)}\right). 
\end{align*}
By \cref{assum:decrease_assum}, we have
\begin{align*}
    \mathbb{P}l_{\theta^{(1)}}\left(\theta^{(1)}\right) - \mathbb{P}l_{\theta^{(0)}}\left(\theta^{(0)}\right) \le \gamma \left( \mathbb{P}l_{\theta^{(0)}}\left(\theta^{(1)}\right) - \mathbb{P}l_{\theta^{(0)}}\left(\theta^{(0)}\right) \right) \le 
     -\frac{\gamma c}{8\beta} \mathbb{P}l_{\theta^{(0)}}\left(\theta^{(0)}\right),
\end{align*}
which implies
\begin{align*}
     \mathbb{P}l_{\theta^{(1)}}\left(\theta^{(1)}\right) \le \mathbb{P}l_{\theta^{(0)}}\left(\theta^{(0)}\right) \left(1-\frac{\gamma c}{8\beta}\right)^1.
\end{align*}
Define $\tilde{\mathbb{P}}l_{\theta} = \mathbb{P} l_{\theta}(\theta)$. This can be rewritten as
\begin{align*}
     \tilde{\mathbb{P}}l_{\theta^{(1)}} - \tilde{\mathbb{P}}l_{\theta^{(0)}} \le -\frac{\gamma c}{8\beta} \tilde{\mathbb{P}}l_{\theta^{(0)}},
\end{align*}
and by \cref{assum:loss_to_object} we have the inequality
\begin{align*}
     r\left(\theta^{(1)}\right) \ge  r^* - c_2 \tilde{\mathbb{P}}l_{\theta^{(1)}} \ge r^* - c_2 \tilde{\mathbb{P}}l_{\theta^{(0)}}\left(1-\frac{\gamma c}{8\beta}\right)^{1}. 
\end{align*}
By induction, we see that
\begin{align*}
   \mathbb{P}l_{\theta^{(t)}}\left(\theta^{(t)}\right) \le \mathbb{P}l_{\theta^{(0)}}\left(\theta^{(0)}\right)\left(1 - \frac{\gamma c}{8\beta}\right)^{t}.
\end{align*}
For each $t$, with probability at least $1-N_t^{-4}$, 
\begin{equation*}
\resizebox{0.99\hsize}{!}{$%
\begin{aligned}
& \mathbb{P}l_{\theta^{(t)}}\left(\theta^{(t+1)}\right) - \mathbb{P}l_{\theta^{(t)}}\left(\theta^{(t)}\right) \\
\le& 
 -\frac{c}{4\beta} \mathbb{P}l_{\theta^{(t)}}\left(\theta^{(t)}\right) + \left(2 -\frac{c}{4\beta} \right) \sup_{\theta \in \Theta}  \left|\mathbb{P}l_{\theta^{(t)}}(\theta) - \mathbb{P}_{n_t} l(X;\theta)\right|\\ \le  &-\frac{c}{4\beta} \mathbb{P}l_{\theta^{(t)}}\left(\theta^{(t)}\right) + \left(2 -\frac{c}{4\beta} \right)  2 C_LC_{\Theta}\log c_{\Theta} n_t^{-1/2}\\ \le &
  -\frac{c}{4\beta} \mathbb{P}l_{\theta^{(t)}}\left(\theta^{(t)}\right) + \left(2 -\frac{c}{4\beta} \right)  2 C_LC_{\Theta}\log c_{\Theta} \left(N_tr\left(\theta^{(t)}\right) -2\left(N_t r\left(\theta^{(t)}\right)\right)^{1/2}\log N_t\right)^{-1/2}.
\end{aligned}
$%
}
\end{equation*}
We now choose $N_t$ so that 
\begin{equation*}
\resizebox{0.99\hsize}{!}{$%
\begin{aligned}
     \left(2 -\frac{c}{4\beta} \right)  2 C_LC_{\Theta}\log c_{\Theta} \left(N_tr\left(\theta^{(t)}\right) - 2\left(N_tr\left(\theta^{(t)}\right)\right)^{1/2}\log N_t\right)^{-1/2} \le  \frac{c}{8\beta} \mathbb{P}l_{\theta^{(0)}}\left(\theta^{(0)}\right)\left(1 - \frac{\gamma c}{8\beta}\right)^{t}.
\end{aligned}
$%
}
\end{equation*}
Then 
\begin{align*}
    \mathbb{P}l_{\theta^{(t)}}\left(\theta^{(t+1)}\right) &\le \left(1-\frac{c}{4\beta}\right)  \mathbb{P}l_{\theta^{(t)}}\left(\theta^{(t)}\right) + \frac{c}{8\beta }  \mathbb{P}l_{\theta^{(0)}}\left(\theta^{(0)}\right)\left(1-\frac{\gamma c}{8\beta}\right)^t \\& \le 
     \left(1-\frac{c}{4\beta}\right)  \mathbb{P}l_{\theta^{(0)}}\left(\theta^{(0)}\right)\left(1-\frac{c}{8\beta}\right)^t+ \frac{c}{8\beta }  \mathbb{P}l_{\theta^{(0)}}\left(\theta^{(0)}\right)\left(1-\frac{\gamma c}{8\beta}\right)^t\\& \le
      \mathbb{P}l_{\theta^{(0)}}\left(\theta^{(0)}\right)\left(1-\frac{c}{8\beta}\right)^{t+1}.
\end{align*}
Similarly,
\begin{align*}
    r\left(\theta^{(t+1)}\right) \ge  r^* - c_2 \tilde{\mathbb{P}}l_{\theta^{(t+1)}} \ge r^* - c_2 \tilde{\mathbb{P}}l_{\theta^{(0)}}\left(1-\frac{\gamma c}{8\beta}\right)^{t+1}. 
\end{align*}

Consequently, we choose
    \begin{align*}
       N_t =r\left(\theta^{(t)}\right)^{-1}
       \left(2\left(16\beta -2c\right)c^{-1} C_LC_{\Theta}\log c_{\Theta}  \right)^{2} \left(\tilde{\mathbb{P}}l_{\theta}^{(0)}\right)^{-2}\left(1-\frac{\gamma c}{8\beta}\right)^{-2t},
    \end{align*}
then for the initial value $\theta^{(0)}$ which is sufficiently accurate, we have 
\[r\left(\theta^{(t)}\right)^{1/2} \ge r\left(\theta^{(0)}\right)^{1/2} \ge 4 N_t^{-1/2}\log N_t,
\]
ensuring that our choice of $N_t$ meets the needed condition.

    Finally, we define $\zeta = \gamma c/(8\beta - \gamma c)$, then
\begin{align*}
    n_t = N_tr\left(\theta^{(t)}\right)=
       \left(2\left(16\beta -2c\right)c^{-1} C_LC_{\Theta}\log c_{\Theta}  \right)^{2} \left(\tilde{\mathbb{P}}l_{\theta}^{(0)}\right)^{-2}\left(1+\zeta\right)^{-2t}.
\end{align*}
By the inequality
    $$1-\sum_{t=0}^{T-1}N_t^{-4} \ge 1-\sum_{t=0}^{T-1}n_t^{-4}$$ 
    we obtain the results stated in \cref{thm:exp_converge}.
\end{pf}

\begin{pf}{\it of \cref{thm:constant_policy}. }
\label{pf:constant_policy}
    By \cref{assum:loss_to_object}, we have
    \begin{equation*}
\resizebox{0.99\hsize}{!}{$%
    \begin{aligned}
         \sup_{\theta\in \Theta}\mathbb{E}_{x\sim \mathcal{P}_{\theta}}[\mathcal{R}(x)]- \mathbb{E}_{x\sim \mathcal{P}_{\theta_{\pi_{\mathrm{const}}}^{(T)}}} [\mathcal{R}(x)]&\ge 
         c_3 \mathbb{P}l_{\theta_{\pi_{\mathrm{const}}}^{(T)}}\theta_{\pi_{\mathrm{const}}}^{(T)}\\& \ge 
         c_3 \left(\mathbb{P}l_{\theta_{\pi_{\mathrm{const}}}^{(T)}}\theta_{\pi_{\mathrm{const}}}^{(T)} - \mathbb{P}_{n_0}l\left(X;\theta_{\pi_{\mathrm{const}}}^{(T)}\right)\right) + \mathbb{P}_{n_0}l\left(X;\theta_{\pi_{\mathrm{const}}}^{(T)}\right)\\& \ge 
         c_3 \frac{1}{n_0}\sum_{i=1}^{n_0}\left(l\left(x_i;\theta_{\pi_{\mathrm{const}}}^{(T)}\right) - \mathbb{E}_{X\sim \mathcal{P}_{l\left(x_i;\theta_{\pi_{\mathrm{const}}}^{(T)}\right)}}\left[l\left(X;\theta_{\pi_{\mathrm{const}}}^{(T)}\right)\right]\right)\\
         & \ge 
         c_3 n_0^{-1/2} \Phi(0.75)\operatorname{Var}_{X\sim \mathcal{P}_{l\left(x_i;\theta_{\pi_{\mathrm{const}}}^{(T)}\right)}}\left[l\left(X;\theta_{\pi_{\mathrm{const}}}^{(T)}\right)\right]^2 
    \end{aligned}
    $%
    }
    \end{equation*}
    with probability larger than $1-0.75=0.25$, where we invoke the central limit theorem for the last inequality.
\end{pf}
\begin{pf}{\it of \cref{thm:polynomial_policy}. }
\label{pf:polynomial_policy}
    Let $\varepsilon >0$ be given. We first show that there exists some $t$ satisfying
    \begin{align*}
         r\left(\theta_{\pi}^{(t)}\right)   \ge r^* - \frac{1}{2}\varepsilon.
    \end{align*}
    Following the proof of \cref{thm:exp_converge}, we have
\begin{align*}
     \tilde{\mathbb{P}}l_{\theta_{\pi}^{(t+1)}}- \tilde{\mathbb{P}}l_{\theta_{\pi}^{(t)}} \le -\frac{\gamma c}{4\beta} \tilde{\mathbb{P}}l_{\theta_{\pi}^{(t)}} + \gamma \left(2 -\frac{c}{4\beta} \right)  2 C_LC_{\Theta}\log c_{\Theta}n_t^{-1/2}. 
\end{align*}
    There exists a sufficiently large $T'$ such that
    \begin{align*}
          \gamma \left(2 -\frac{c}{4\beta} \right)  2 C_LC_{\Theta}\log c_{\Theta}n_T^{-1/2} \le \min\left\{\frac{\gamma c}{16\beta c_2}\varepsilon, \frac{1}{2c_2}\varepsilon\right\}.
    \end{align*}
    For any $t\ge T'$, consider the following two cases:
\begin{enumerate}[(a)]
    \item $ \tilde{\mathbb{P}}l_{\theta_{\pi}^{(t)}} \le \varepsilon/(2c_2)$.
    
    By \cref{assum:loss_to_object}, we have
\begin{align*}
    r\left(\theta_{\pi}^{(t)}\right) \ge r^* - c_2 \tilde{\mathbb{P}}l_{\theta_{\pi}^{(t)}} \ge r^* - \frac{1}{2}\varepsilon.
\end{align*}
    \item $ \tilde{\mathbb{P}}l_{\theta_{\pi}^{(t)}} >\varepsilon/(2c_2)$.
    
    Since $n_t \rightarrow \infty$ as $t\rightarrow \infty$, for all $t\ge T'$ we have 
\begin{align*}
    \gamma \left(2 -\frac{c}{4\beta} \right)  2 C_LC_{\Theta}\log c_{\Theta}n_t^{-1/2} \le \frac{\gamma c}{16\beta c_2}\varepsilon
     \le \frac{\gamma c}{8\beta} \tilde{\mathbb{P}}l_{\theta_{\pi}^{(t)}}.
\end{align*}
    Therefore,
\begin{align*}
     \tilde{\mathbb{P}}l_{\theta_{\pi}^{(t+1)}}- \tilde{\mathbb{P}}l_{\theta_{\pi}^{(t)}} \le -\frac{\gamma c}{8\beta} \tilde{\mathbb{P}}l_{\theta_{\pi}^{(t)}}.
\end{align*}
Repeating this argument for $t=T',T'+1,\cdots$ whenever $ \tilde{\mathbb{P}}l_{\theta_{\pi}^{(t)}} > \varepsilon/(2c_2)$ shows that there must exist a finite $K$ (so $T = T' + K$) where either
\begin{align*}
     r\left(\theta_{\pi}^{(t)}\right) &\ge r^* - c_2\frac{\gamma c}{4\beta} \tilde{\mathbb{P}}l_{\theta_{\pi}^{(T)}} \\&\ge r^* - c_2\frac{\gamma c}{4\beta} \tilde{\mathbb{P}}l_{\theta_{\pi}^{(T')}}\left(1 - \frac{\gamma c}{8\beta}\right)^{T-T'} \ge r^* - \frac{1}{2}\varepsilon,
\end{align*}
or else $\tilde{\mathbb{P}}l_{\theta_{\pi}^{(T)}} \le \varepsilon/(2c_2)$, in which case (a) implies
\begin{align*}
     r\left(\theta_{\pi}^{(t)}\right) \ge r^* - \frac{1}{2}\varepsilon.
\end{align*}
\end{enumerate}
Thus, there exists some $T$ such that
\begin{align*}
     r\left(\theta_{\pi}^{(T)}\right) \ge r^* - \frac{1}{2}\varepsilon.
\end{align*}
Next, we prove that for any $t\ge T$, 
\begin{align*}
     r\left(\theta_{\pi}^{(t)}\right) \ge r^* - \varepsilon.
\end{align*}
For $t\ge T$, we have
\begin{align*}
    \sup_{t\ge T}\left[\tilde{\mathbb{P}}l_{\theta_{\pi}^{(t+1)}}- \tilde{\mathbb{P}}l_{\theta_{\pi}^{(t)}}\right] &\le \sup_{t\ge T}\left[-\frac{\gamma c}{4\beta} \tilde{\mathbb{P}}l_{\theta_{\pi}^{(t)}} + \gamma \left(2 -\frac{c}{4\beta} \right)  2 C_LC_{\Theta}\log c_{\Theta}n_t^{-1/2}\right]\\&\le \sup_{t\ge T}\left[ \gamma \left(2 -\frac{c}{4\beta} \right)  2 C_LC_{\Theta}\log c_{\Theta}n_t^{-1/2}\right]\\&\le 
    \frac{\varepsilon}{2c_2}.
\end{align*}
From part (b) above, whenever $\tilde{\mathbb{P}}l_{\theta_{\pi}^{(t)}}\ge \varepsilon/(2c_2)$, we have
\begin{align*}
   \tilde{\mathbb{P}}l_{\theta_{\pi}^{(t+1)}}- \tilde{\mathbb{P}}l_{\theta_{\pi}^{(t)}}\le -\frac{\gamma c}{8\beta} \tilde{\mathbb{P}}l_{\theta_{\pi}^{(t)}}.
\end{align*}
Hence,
\begin{align*}
     \sup_{t\ge T}\tilde{\mathbb{P}}l_{\theta_{\pi}^{(t)}} &\le \frac{\varepsilon}{2c_2} +  \sup_{t\ge T'}\left[\tilde{\mathbb{P}}l_{\theta_{\pi}^{(t+1)}}- \tilde{\mathbb{P}}l_{\theta_{\pi}^{(t)}}\right]  \le \frac{\varepsilon}{c_2}.
\end{align*}
It follows that for all $t\ge T$, 
\begin{align*}
    r\left(\theta_{\pi}^{(t)}\right) \ge r^* -\varepsilon,
\end{align*}
which completes the proof.
\end{pf}
\begin{pf}{\it of \cref{thm:policy_compare}. }
\label{pf:policy_compare}
    We prove this theorem by showing that for a given policy $\pi'$, there exists a specified parameter space $\Theta$ and $\Theta^*$, a loss function $l(x;\theta)$, and a reward model $\mathcal{R}(x)$, under \cref{assum:gradient_assum,assum:decrease_assum,assum:loss_to_object} such that
    \begin{equation}
    \label{eq:proof_of_cost}
          \begin{aligned}
        C(\pi',T^*(\pi',\varepsilon)) &= \sum_{t=0}^{T^*(\pi',\varepsilon)-1}(c_gN'_t +c_\mathrm{t} n'_t) \\&> C_{\mathrm{exp}}\varepsilon^{-2} - c_{\mathrm{exp}}
        \\& \ge C(\pi_{\mathrm{exp}},T^*(\pi_{\mathrm{exp}},\varepsilon))
    \end{aligned}  
    \end{equation}
    with probability at least $\left(1-\sum_{t=0}^{T^*(\pi_{\mathrm{exp}}, \varepsilon)-1} n_t^{-4}\right)$.
    
    To begin, we use \cref{thm:exp_converge} to obtain
    \begin{align*}
        T^*(\pi_{\mathrm{exp}},\varepsilon) \le \log^{-1}\left(1-\frac{\gamma c}{8\beta}\right)\left(\log \varepsilon - \log c_2\tilde{\mathbb{P}}l_{\theta^{(0)}}\right):=T_1
    \end{align*}
    with probability at least $\left(1-\sum_{t=0}^{T^*(\pi_{\mathrm{exp}}, \varepsilon)-1} n_t^{-4}\right)$.
    Then the equality
\begin{equation*}
\resizebox{0.99\hsize}{!}{$%
\begin{aligned}
        C(\pi_{\mathrm{exp}},T^*(\pi_{\mathrm{exp}},\varepsilon))&\le C(\pi_{\mathrm{exp}},T_1)\\&=
        \sum_{t=0}^{T_1-1}(c_\mathrm{g}r_t^{-1} +c_\mathrm{t})
       \left(2\left(16\beta -2c\right)c^{-1} C_LC_{\Theta}\log c_{\Theta}  \right)^{2} \left(\tilde{\mathbb{P}}l_{\theta}^{(0)}\right)^{-2}\left(1-\frac{\gamma c}{8\beta}\right)^{-2t}\\& \le
       (c_\mathrm{g}r_0^{-1} + c_\mathrm{t}) \left(2\left(16\beta -2c\right)c^{-1} C_LC_{\Theta}\log c_{\Theta}  \right)^{2} \left(\tilde{\mathbb{P}}l_{\theta}^{(0)}\right)^{-2}\sum_{t=0}^{T_1-1}\left(1-\frac{\gamma c}{8\beta}\right)^{-2t}\\& \le
       (c_\mathrm{g}r_0^{-1} + c_\mathrm{t}) \left(2\left(16\beta -2c\right)c^{-1} C_LC_{\Theta}\log c_{\Theta}  \right)^{2} \left(\tilde{\mathbb{P}}l_{\theta}^{(0)}\right)^{-2} \frac{(1+\zeta)^{2(T_1-1)-1}}{2\zeta+\zeta^2}\\& \le
       C_{\mathrm{exp}}\varepsilon^{-2} - c_{\mathrm{exp}}.
    \end{aligned}
    $%
    }
    \end{equation*}
    holds for some positive constants $C_{\mathrm{exp}}$ and $c_{\mathrm{exp}}$ that do not depend on $\varepsilon$. So we have proven the last inequality in \cref{eq:proof_of_cost}. 

Next, we consider the second inequality in \cref{eq:proof_of_cost}.
When $\pi'$ is a constant policy, \cref{thm:constant_policy} implies that for sufficiently small $\varepsilon$, $T^*(\pi',\varepsilon) = \infty$ and the inequality holds vacuously. Otherwise, for a polynomial growth policy $\pi'$ under  \cref{assum:gradient_assum,assum:decrease_assum,assum:loss_to_object}, there exist $\Theta$, $\Theta^*$, $l(x;\theta)$, and $\mathcal{R}(x)$ such that for $\theta^{(t)}_{\pi'}$, 
\begin{align*}
     \tilde{\mathbb{P}}l_{\theta^{(t+1)}_{\pi'}}- \tilde{\mathbb{P}}l_{\theta^{(t)}_{\pi'}} = \min\left\{-\frac{\gamma c}{4\beta} \tilde{\mathbb{P}}l_{\theta^{(t)}_{\pi'}} + \rho \gamma \left(2 -\frac{c}{4\beta} \right)  2C_LC_{\Theta}\log c_{\Theta} \frac{1}{\sqrt{n_t'}},0\right\},
\end{align*}
for some $0<\rho \le 1$, and by \cref{assum:loss_to_object},
\begin{align*}
   r^*  - r\left(\theta^{(t)}_{\pi'}\right)\ge c_3\tilde{\mathbb{P}}l_{\theta^{(t)}_{\pi'}}.
\end{align*}
Since $\pi'$ is a polynomial growth policy, let $\pi'=(n'_0,n'_1,n'_2,\cdots)$ with $n'_t= n'_0(1+t)^{\alpha},\alpha>0$ by definition. A decrease in expected loss occurs only if 
\begin{align*}
    \rho \gamma \left(2 -\frac{c}{4\beta} \right)  2C_LC_{\Theta}\log c_{\Theta} \frac{1}{\sqrt{n'_0(1+t)^{\alpha}}}\le  \frac{\gamma c}{4\beta} \tilde{\mathbb{P}}l_{\theta^{(t)}_{\pi'}}.
\end{align*}
Besides,
\begin{align*}
    \tilde{\mathbb{P}}l_{\theta^{(T^*(\pi',\varepsilon))}_{\pi'}} \le c_3^{-1} \left(r^*  - \mathbb{E}_{x\sim \mathcal{P}_{\theta^{(T^*(\pi',\varepsilon))}_{\pi'}}}[\mathcal{R}(x)] \right) \le c_3^{-1}\varepsilon,
\end{align*}
it follows that
\begin{align*}
     \rho \gamma \left(2 -\frac{c}{4\beta} \right)  2C_LC_{\Theta}\log c_{\Theta} \frac{1}{\sqrt{n'_0(1+T^*(\pi',\varepsilon))^{\alpha}}} \le \frac{\gamma c}{4\beta c_3}\varepsilon,
\end{align*}
so
\begin{align*}
    T^*(\pi',\varepsilon) \ge \left[\left(\frac{c}{2c_3\rho (8\beta-c)C_LC_{\Theta}\log c_{\Theta} }\right)^{-2} n_0^{'-1}\right]^{1/\alpha}\varepsilon^{-2/\alpha} - 1:=T_2.
\end{align*}
Thus,
\begin{align*}
    \sum_{t=0}^{T^*(\pi',\varepsilon)-1}(c_\mathrm{g}N'_t +c_\mathrm{t} n'_t)&\ge 
    \sum_{t=0}^{T_2-1}(c_\mathrm{g}N'_t +c_\mathrm{t} n'_t)\\&\ge
    \sum_{t=0}^{T_2-1}(c_\mathrm{g}+c_\mathrm{t}) n'_0(1+t)^\alpha \\&\ge
    (c_\mathrm{g}+c_\mathrm{t})n'_0\left(\frac{T_2}{2}\right)^{\alpha}\frac{T_2}{2}:=
    C_{\mathrm{poly}}\varepsilon^{-2(\alpha+1)/\alpha}-c_{\mathrm{poly}}
\end{align*}
for some positive constants $C_{\mathrm{poly}}$ and $c_{\mathrm{poly}}$ independent of $\varepsilon$.
If $\varepsilon$ is small enough, it follows that
\begin{align*}
   \sum_{t=0}^{T^*(\pi',\varepsilon)-1}(c_\mathrm{g}N'_t +c_\mathrm{t} n'_t)&\ge C_{\mathrm{poly}}\varepsilon^{-2(\alpha+1)/\alpha}-c_{\mathrm{poly}}  >
    C_{\mathrm{exp}}\varepsilon^{-2} - c_{\mathrm{exp}},
\end{align*}
so the second inequality in \cref{eq:proof_of_cost} also holds.
\end{pf}

\subsection{Extension to Rewards with Random Noise}\label{appen-subsec:noisy_reward}

Our theoretical conclusions directly generalize to cases with noisy rewards. Consider a perturbed reward function $\hat{\mathcal{R}}(x) = \mathcal{R}(x) + \varepsilon$, where $\varepsilon$ is zero-mean noise independent of $\mathcal{R}(x)$. Crucially, the expected reward $r(\theta)$ remains unchanged, preserving all key theoretical results. Although noisy rewards affect the sample selection process, the concentration bounds for selected sample sizes (\cref{eq:concentration_bound}) and the expected reward under the selected distribution remain valid asymptotically. This robustness stems from the exponential policy's inherent adaptivity by progressively increasing sample sizes (\cref{thm:exp_converge_main}), it naturally compensates for reward noise variance. 
\section{Diffusion Probabilistic Models}\label{app:dpm}
A diffusion probabilistic model (DPM) progressively adds noise to clean data sampled from the target distribution until the data is fully corrupted. The generative process then employs a noise prediction model to transform noisy samples into progressively cleaner ones auto-regressively. This reverse-time evolution naturally resembles a denoising process, making DPMs suitable for image denoising, especially when the noise modeling is known \citep{xie2023diffusion}. 

Formally, a DPM defines a forward stochastic process with conditional probability
$$
p_{\tau\mid0}(\bm x_\tau\mid\bm x_0)=\mathcal{N}(\bm x_\tau;\alpha_\tau\bm x_0,\beta_\tau^2\bm I),\quad \tau\in[0,1],
$$
where $\bm x_0$ is drawn from the target distribution that the model aims to learn. The parameters $\alpha_\tau$ and $\beta_\tau$ are typically chosen such that $\alpha_0=\beta_1=1$ and $\alpha_1=\beta_0=0$. This process progressively adds noise to \( \bm x_0 \), resulting in a fully corrupted sample \(\bm x_1\sim\mathcal{N}(\bm 0, \bm I) \).

To model the reverse process, the time interval \( [0, 1] \) is discretized into steps \( 0 = \tau_0 < \tau_1 < \cdots < \tau_K = 1 \). Starting at \( \tau_K \), the reverse diffusion process gradually removes noise step by step until reaching \( \tau_0 \). At each step \( k \), the noise added during the forward process is estimated and removed using a trained noise prediction model \( \bm\varepsilon_{\theta}(\cdot, \tau_k) \). This model learns to predict the noise \( \bm\varepsilon \) based on the noisy sample \( \bm x_{\tau_k} \), enabling the reconstruction of cleaner samples at each step. 

The standard training procedure, which is also adapted in our experiments, attempts to solve
\begin{equation}\label{eq:diffusion_loss}
    \min_{\bm\theta}\mathbb{E}_{\bm x\sim\mathscr{D}}\mathbb{E}_{\substack{\bm\varepsilon\sim\mathcal{N}(\bm0,\bm I)\\\tau\in\mathcal U(0,s)}}\left\|\bm\varepsilon_{\bm\theta}(\alpha_\tau\bm x+\beta_\tau\bm\varepsilon,\tau)-\bm\varepsilon\right\|^2,
\end{equation}
where $\mathscr D$ denotes the distribution of training data. 
\section{Algorithm for Realistic Experiments}
\label{appen-sec:alg}
We follow the algorithm in \cref{alg:iterative}. Empirically, in the two realistic settings (image denoising and math reasoning), our iterative learning policy satisfies that $n_t$ is an integer multiple of the batch size. Specifically, 
\begin{itemize}
    \item For the constant scheme, 
    $$
    n_t = \lfloor n \rfloor B;
    $$
    \item For the linear scheme,
    $$
    n_t = \lfloor n (t+1) \rfloor B;
    $$
    \item For the exponential scheme, 
    $$
    n_t = \lfloor n (1+u)^t \rfloor B, 
    $$
\end{itemize}

where $n$ and $u$ serve as hyperparameters, and $B$ is the generation batch size.

\section{Implementation Details}
\label{appen-sec:implementation}

We provide the code for all the experiments in the GitHub repository: \url{https://github.com/zylipku/spend-wisely}.

\subsection{Toy Example}
\label{appen-subsec:implementation-toy}
In this subsection, we extend the theoretical intuition from \cref{thm:intuition} to high-dimensional settings and then derive the expected reward of each iteration, as well as the corresponding optimal expected reward.

We begin with the following theorem, which shows that in a high-dimensional Gaussian setting, the same exponential growth strategy emerges as in the one-dimensional case.
\begin{theorem} \label{appen-thm:high-d-toy}
Under the MLE update, suppose that the generator is a $d$-dimensional Gaussian distribution $\mathcal{N}(\bm \theta, \sigma^2 \bm I_d)$, the reward model is an exponential function $\mathcal{R}(\bm x) = \exp \left( 
-\| \bm x \|^2/(2 \kappa^2) \right)$, and we start from 
\[\left\| \bm \theta^{(0)} \right\| \leq \left(1 + \sigma^2/\kappa^2\right)^T \left(d\left(\sigma^2 + \kappa^2\right)\right)^{1/2}.\]
Considering the optimization problem
\begin{align*}
\max_\pi & \quad \mathbb{E}_{\bm\theta^{(T)}}\left[\mathbb{E}_{\bm x\sim \mathcal{P}_{\bm \theta^{(T)}}} \mathcal{R}(\bm x)\right]\quad\text{s.t.}\quad \sum_{k=0}^{T-1} n_t \leq C, 
\end{align*}
then the optimal policy satisfies
$$
n_t \propto \left( 1 + \frac{\sigma^2}{\kappa^2} \right)^t.
$$
\end{theorem}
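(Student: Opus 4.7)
The plan is to mimic the one-dimensional argument in the proof of \cref{thm:intuition} almost verbatim, exploiting the spherical symmetry of both $\mathcal{P}_{\bm\theta} = \mathcal{N}(\bm\theta,\sigma^2 \bm I_d)$ and the reward $\mathcal{R}(\bm x) = \exp(-\|\bm x\|^2/(2\kappa^2))$. Because the covariance is isotropic and the reward decomposes as a product over coordinates, the iterative MLE process runs independently in each coordinate, and all the 1D computations lift coordinate-wise.

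First I would establish the high-dimensional analogue of \cref{lem:margin}. Conditioning on selection, the post-selection density is proportional to $\exp(-\|\bm x\|^2/(2\kappa^2) - \|\bm x - \bm\theta^{(t)}\|^2/(2\sigma^2))$, which is $\mathcal N\!\bigl(\tfrac{\bm\theta^{(t)}}{1+\sigma^2/\kappa^2},\tfrac{\sigma^2}{1+\sigma^2/\kappa^2} \bm I_d\bigr)$. The MLE is the sample mean, so
\begin{equation*}
\bm\theta^{(t+1)}\mid\bm\theta^{(t)},n_t\sim\mathcal N\!\left(\tfrac{\bm\theta^{(t)}}{1+\sigma^2/\kappa^2},\tfrac{\sigma^2}{n_t(1+\sigma^2/\kappa^2)} \bm I_d\right).
\end{equation*}
Unrolling this recursion with independent Gaussian increments gives $\bm\theta^{(T)}\mid\bm\theta^{(0)},\pi \sim \mathcal N(\bm\mu_T,\sigma_T^2 \bm I_d)$ with $\bm\mu_T = \bm\theta^{(0)}/(1+\sigma^2/\kappa^2)^T$ and $\sigma_T^2 = \sigma^2 \sum_{t=0}^{T-1} 1/\bigl(n_t(1+\sigma^2/\kappa^2)^{2(T-t)-1}\bigr)$, exactly as in 1D.

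Next I would compute the expected reward. Since everything factorizes into $d$ identical one-dimensional Gaussian integrals, the same computation that gave the 1D closed form yields
\begin{equation*}
\mathbb{E}_{\bm\theta^{(T)}}\mathbb{E}_{\bm x\sim\mathcal P_{\bm\theta^{(T)}}}\mathcal{R}(\bm x) = \left(\frac{\kappa}{\sqrt{\sigma^2+\kappa^2+\sigma_T^2}}\right)^{\!d}\exp\!\left(-\frac{\|\bm\mu_T\|^2}{2(\sigma^2+\kappa^2+\sigma_T^2)}\right).
\end{equation*}
Writing $y = \sigma^2+\kappa^2+\sigma_T^2$ and differentiating $-\tfrac{d}{2}\log y - \tfrac{\|\bm\mu_T\|^2}{2y}$ in $y$ shows the expression is monotonically decreasing in $y$ whenever $\|\bm\mu_T\|^2 \le d y$. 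The hypothesis $\|\bm\theta^{(0)}\|\le(1+\sigma^2/\kappa^2)^T(d(\sigma^2+\kappa^2))^{1/2}$ is exactly what guarantees $\|\bm\mu_T\|^2 \le d(\sigma^2+\kappa^2)\le dy$, so maximizing the expected reward reduces to minimizing $\sigma_T^2$ subject to the budget.

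Finally, the constrained minimization of $\sigma_T^2$ is the same Cauchy--Schwarz step as in 1D: for any sequence $n_t > 0$ with $\sum n_t \le C$,
\begin{equation*}
\sigma_T^2 \cdot \sum_{t=0}^{T-1} n_t \;\ge\; \sigma^2\bigl(1+\sigma^2/\kappa^2\bigr)\left(\sum_{t=0}^{T-1}(1+\sigma^2/\kappa^2)^{-(T-t)}\right)^{\!2},
\end{equation*}
with equality if and only if $n_t \propto (1+\sigma^2/\kappa^2)^{t-T} \propto (1+\sigma^2/\kappa^2)^t$, which is the claim. The only slightly subtle point---and therefore the one I would take extra care with---is verifying the monotonicity regime for the expected reward, because in dimension $d$ the threshold $\|\bm\mu_T\|^2 \le d(\sigma^2+\kappa^2)$ picks up the extra factor of $d$ relative to the 1D case; this is precisely why the hypothesis in the theorem has the $(d(\sigma^2+\kappa^2))^{1/2}$ factor rather than $(\sigma^2+\kappa^2)^{1/2}$. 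Everything else is a direct transcription of the 1D proof.
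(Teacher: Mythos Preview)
Your proposal is correct and follows essentially the same approach as the paper: factorize over coordinates to lift \cref{lem:margin} to $d$ dimensions, compute the closed-form expected reward, use the hypothesis on $\|\bm\theta^{(0)}\|$ to reduce to minimizing $\sigma_T^2$, and finish with the same Cauchy--Schwarz argument. Your treatment of the monotonicity step is actually slightly more explicit than the paper's (which just asserts the right-hand side is increasing in $(\sigma^2+\kappa^2+\sigma_T^2)^{-1/2}$ under the assumption), but the route is identical.
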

\begin{proof}
Since each component of $\bm \theta^{(t)}$ is i.i.d. and its distribution conditioned on the initial parameter $\theta^{(0)}$ and the policy $\pi$ is given by \cref{lem:margin}, it follows that
$$
p\left(\bm\theta^{(T)} \mid \bm\theta^{(0)}, \pi\right) = \mathcal{N}\left( \frac{\bm\theta^{(0)}}{(1+\sigma^2/\kappa^2)^T}, \sigma^2\sum_{t=0}^{T-1} \frac{1}{n_t(1+\sigma^2/\kappa^2)^{2(T-t)-1}} \bm I_d \right)
$$
for a specific policy $\pi: \{n_t\}_{t}$. 
By the definition of the reward model, $\mathcal R(\bm x)$ factors as \[\mathcal{R}(\bm x) = \prod_{i=1}^d \mathcal{R}(x_i),\quad\bm x = (x_1, x_2, \cdots, x_d).\]Repeating the argument in the proof of \cref{thm:intuition} shows that 
\begin{align*}
\mathbb{E}_{\bm \theta^{(T)}} \mathbb{E}_{\bm x} \mathcal{R}(\bm x) & = \mathbb{E}_{\bm \theta^{(T)}} \mathbb{E}_{\bm x} \prod_i \mathcal{R}(x_i) \\
& = \prod_i \mathbb{E}_{\bm \theta^{(T)}} \mathbb{E}_{\bm x} \mathcal{R}(x_i) \\
& = \left( \frac{\kappa}{\sqrt{\sigma^2+\kappa^2+\sigma_T^2}} \right)^{d} \exp \left( - \sum_{i=1}^d \frac{\left(\theta_i^{(0)}\right)^2}{2(\sigma^2+\kappa^2+\sigma_T^2)(1+\sigma^2/\kappa^2)^{2T}} \right) \\
& = \left( \frac{\kappa}{\sqrt{\sigma^2+\kappa^2+\sigma_T^2}} \right)^{d} \exp \left( - \frac{\left\| \bm\theta^{(0)} \right\|^2}{2(\sigma^2+\kappa^2+\sigma_T^2)(1+\sigma^2/\kappa^2)^{2T}} \right). 
\end{align*}

With assumption $\lVert \bm \mu_T \rVert \leq \left(d(\sigma^2+\kappa^2)\right)^{1/2}$, the right side increases as $\left(\sigma^2+\kappa^2+\sigma_T^2\right)^{-1/2}$ increases. Hence, figuring with the proof of \cref{thm:intuition} in \cref{appen-sec:theory-intuition}, one obtains
$$
n_t \propto \left(1+\frac{\sigma^2}{\kappa^2}\right)^{t-T}. 
$$
Reindexing or shifting $t$ completes the proof.
\end{proof}
\cref{appen-thm:high-d-toy} therefore shows that the optimal iterative policy in the high-dimensional case mirrors the one-dimensional scenario. Next, we derive the expected reward in the high-dimensional case. Specifically, if $\bm x\sim\mathcal N(\bm \theta,\sigma^2\bm I_d)$, then
\begin{align*}
\mathbb{E}_{\bm x \sim \mathcal{N}(\bm\theta, \sigma^2\bm I_d)}[\mathcal{R}(\bm x)] & = \int \frac{1}{(2\pi\sigma^2)^{d/2}}\exp \left( -\frac{\lVert \bm x \rVert^2}{2\kappa^2} -\frac{\lVert \bm x-\bm\theta\rVert^2}{2\sigma^2} \right) \mathrm{d}\bm x \\
& = \frac{1}{(1+\sigma^2/\kappa^2)^{d/2}} \cdot \exp \left(-\frac{\lVert \bm \theta\rVert ^2}{2(\sigma^2+\kappa^2)}\right). 
\end{align*}
Clearly, the maximum of this expression is attained at $\bm \theta = \bm 0$, and the optimal expected reward is $(1+\sigma^2/\kappa^2)^{-d/2}$.

\subsection{Synthetic Experiment with Gaussian Distributions}\label{append_subsec:toy_exp}
\paragraph{Setup.} We begin with the synthetic setting introduced in our warm-up (\cref{subsec:intuit}), which involves optimizing a generative distribution parameterized as a Gaussian to maximize the expected reward. {Specifically, we focus on a two-dimensional space using MLE updates at each iteration. The generator is set to $\mathcal{N}(\bm \theta, \bm I_2)$, with fixed variance. The reward model is defined as an exponential function $\mathcal{R}(\bm x) = \exp\left(- \lVert \bm x \rVert^2/(2\kappa^2) \right)$, favoring data concentrated near the origin. $\kappa$ controls the flatness of the reward.}

All theoretical results from the warm-up extend naturally to the high-dimensional case, as detailed in \cref{appen-subsec:implementation-toy}. We now validate that with equal computational budgets and iterations, the exponential scheme outperforms the linear scheme, which outperforms the constant scheme.

\paragraph{Implementation.} We consider two settings with $\kappa^2\in\{2, 4\}$. Based on theoretical analysis, the exponential scheme is set as $n_t = \lfloor n_0 \cdot (1 + 1/\kappa^2)^t \rfloor$, where $n_0 = 10$. For a fixed total budget and number of iterations, the constant scheme is then 
\[n_t = \left\lfloor T^{-1}\sum_{k=0}^{T-1} n_0 \cdot (1+1/\kappa^2)^k \right\rfloor,\]
while the linear scheme is 
\[n_t = \left\lfloor \frac{2(t+1)}{T(T-1)} \sum_{k=0}^{T-1} n_0 (1+1/\kappa^2)^k \right\rfloor.\]
The initial parameter is set to $\bm\theta^{(0)} = (1, 1)^\top$, and the optimal solution for the reward function is $\bm\theta^* = (0, 0)^\top$. The optimal reward value is then 
\[
\mathbb{E}_{\bm x \sim \mathcal{N}(\bm 0, \bm I)}[\mathcal{R}(\bm x)] = \left(1 + 1/\kappa^2\right)^{-1}. 
\]
We evaluate how fast the three schemes approach this optimal reward value.

\begin{algorithm}[t]
\caption{Iterative learning for image denoising}\label{alg:image-denoising}
\begin{algorithmic}[1]
\STATE \textbf{Input:} a pre-trained generative model $f(\cdot;\bm\theta^{(0)})$, a policy $\pi:\{n_t\}_t$ with a terminal time $T$, a reward model $\mathcal R$, and a dataset $\mathcal D=\{\bm x_i\}_i$.
\STATE \textbf{Output:} a generative model $f(\cdot;\bm\theta^{(T)})$ fine-tuned with synthetic data.
\vspace{2ex}
\FOR{$t\leftarrow0$ to $T-1$}
\STATE Initialize the set of selected synthetic data $D_t=\varnothing$.
\vspace{1ex}
\WHILE{$|D_t|<n_t$}
\STATE Sample a mini-batch $\{\bm x_i\}_i\subseteq\mathcal D$ randomly.
\STATE Create noisy $\bm y_i\sim p_{s\mid0}(\cdot\mid\bm x_i)$ for each $\bm x_i$.
\STATE Generate synthetic $\hat{\bm x}_i$ as a denoised version for each $\bm y_i$ based on the generative model $f(\cdot;\bm\theta^{(t)})$;
\STATE For each $\hat{\bm x}_i$, add it into $D_t$ with probability $\mathcal{R}(\hat{\bm x}_i,\bm x_i)$ clipped within the interval $[0,1]$.
\ENDWHILE
\vspace{1ex}
\STATE Truncate $D_t$ so that it contains exactly $n_t$ samples.
\vspace{1ex}
\FOR{each batch $S=\{\hat{\bm x}_i\}_i$ in $D_t$}
\STATE Take a single optimization step with the loss
\[L\left(\bm\theta^{(t)}\right)=|S|^{-1}\sum_{\bm x\in S}\left\|\bm\varepsilon_{\bm\theta^{(t)}}(\alpha_{\tau_{\bm x}}\bm x+\beta_{\tau_{\bm x}}\bm\varepsilon_{\bm x},\tau_{\bm x})-\bm\varepsilon_{\bm x}\right\|^2\]
to update $\bm\theta^{(t)}$,
where $\bm\varepsilon_{\bm x}\sim\mathcal N(\bm0,\bm I)$ and $\tau_{\bm x}\sim\mathcal U(0,s)$.
\ENDFOR
\vspace{1ex}
\STATE $\bm\theta^{(t+1)}\leftarrow\bm\theta^{(t)}$.
\ENDFOR
\end{algorithmic}
\end{algorithm}

\subsection{Image Denoising}
The pre-trained model we choose is a public checkpoint that can be accessed via \url{https://huggingface.co/google/ddpm-cifar10-32}. For all denoising experiments, we fix batch size $B=640$ and learning rate $5\times10^{-5}$.

At each iteration, synthetic samples are generated on the fly, scored by a reward model
\[
\mathcal{R}(\hat{\bm x}, \bm x)
=\frac{\text{PSNR}(\hat{\bm x}, \bm x)-r_{\min}}{r_{\max}-r_{\min}},
\]
and accumulated into a chunk until it reaches size $n_t$. The full training procedure is given in \cref{alg:image-denoising}. Each configuration requires around 100 GPU-hours on an NVIDIA A800 cluster. Note that our setup is much more computationally intensive than the usual training of a generative model in that it takes $s$ times of network inferences to generate synthetic data for network training.

To compare increasing-schedule policies (particularly exponential growth), we report the computational costs in terms of floating-point operations (FLOPs), which scale linearly with the number of forward passes. For diffusion models, generating one sample uses $s$ denoising steps (i.e., $s$ forward passes), and each optimization step introduces an additional forward pass.



\subsection{Math Reasoning}
\begin{algorithm}[t]
\caption{Iterative learning on math reasoning}\label{alg:math}
\begin{algorithmic}[1]
\STATE \textbf{Input:} a pre-trained LLM $f(\cdot; \theta^{(0)})$, a policy $\pi:\{n_t\}_{t}$ with a terminal time $T$, and a dataset $\mathcal{D} = \{(q_i, a_i)\}_i$. 
\STATE \textbf{Output:} $f(\cdot; \theta^{(T)})$ fine-tuned with synthetic data.
\vspace{2ex}
\FOR{$t \gets 0 \ \text{to} \ {T-1}$}
\STATE Initialize the set of selected synthetic data $D_t = \varnothing$.

\vspace{1ex}
// Generate answers and select out correct ones.
\WHILE{$|D_t| < n_t$}
\STATE Sample a mini-batch $\mathcal{B} = \{(q_j, a_j) \}_j \subseteq \mathcal{D}$. 
\STATE Generate answers $\{\hat{a}_j\}$ based on $f(\cdot \mid \{q_j\}; \theta^{(t)})$ with few-shot examples and CoT strategy.
\STATE Add $(q_j, a_j)$ into $D_t$ if $\hat{a}_j = a_j$ for each $j$.
\ENDWHILE
\vspace{1ex}
\STATE $D_t \gets$ a subset of $D_t$ containing exactly $n_t$ data.

\vspace{1ex}
// Update $\theta^{(t)}$ to $\theta^{(t+1)}$ by the selected dataset $D_t$ with auto-regressive loss via Adam optimizer.
\FOR{each $\mathcal B$ as a mini-batch of $D_t$}
\STATE Take a single optimization step with the auto-regressive loss
\[ L(\theta) = \frac{1}{|\mathcal B|} \sum_{(\hat{a}, q) \in \mathcal B} \log p(\hat{a} \mid q; f(\cdot, \theta)) \]
to update $\theta^{(t)}$.
\ENDFOR
\vspace{1ex}
\STATE $\theta^{(t+1)} \gets \theta^{(t)}$.
\ENDFOR
\end{algorithmic}
\end{algorithm}

We summarize the iterative math-reasoning algorithm in \cref{alg:math}. Each configuration requires approximately 400 GPU-hours on an NVIDIA A800 cluster.

Hyperparameters are as follows: one epoch per iteration on the selected data; a constant learning rate of $10^{-7}$; batch size $B=256$; and roughly $1{,}000$ total generator update steps. During generation we use temperature $0.3$ for diversity, and temperature $0$ at evaluation for accuracy. The maximum generation length is $512$. Prompts follow the few-shot setting of the \texttt{GSM8k} dataset in OpenCompass \citep{2023opencompass}. Our implementation builds on the public OpenRLHF framework \citep{hu2024openrlhf}.



\section{Supplementary Empirical Results}
\label{appen-sec:exp}

\subsection{Toy Example}
\label{appen-subsec:toy}
Additional results for the toy example, obtained under various parameter settings, are shown in \cref{appen-fig:toy}. In all cases, the behavior is consistent with the conclusion in \cref{subsec:intuit}.

\begin{figure}[t]
    \centering
    \begin{subfigure}{.6\textwidth}
        \includegraphics[width=\linewidth]{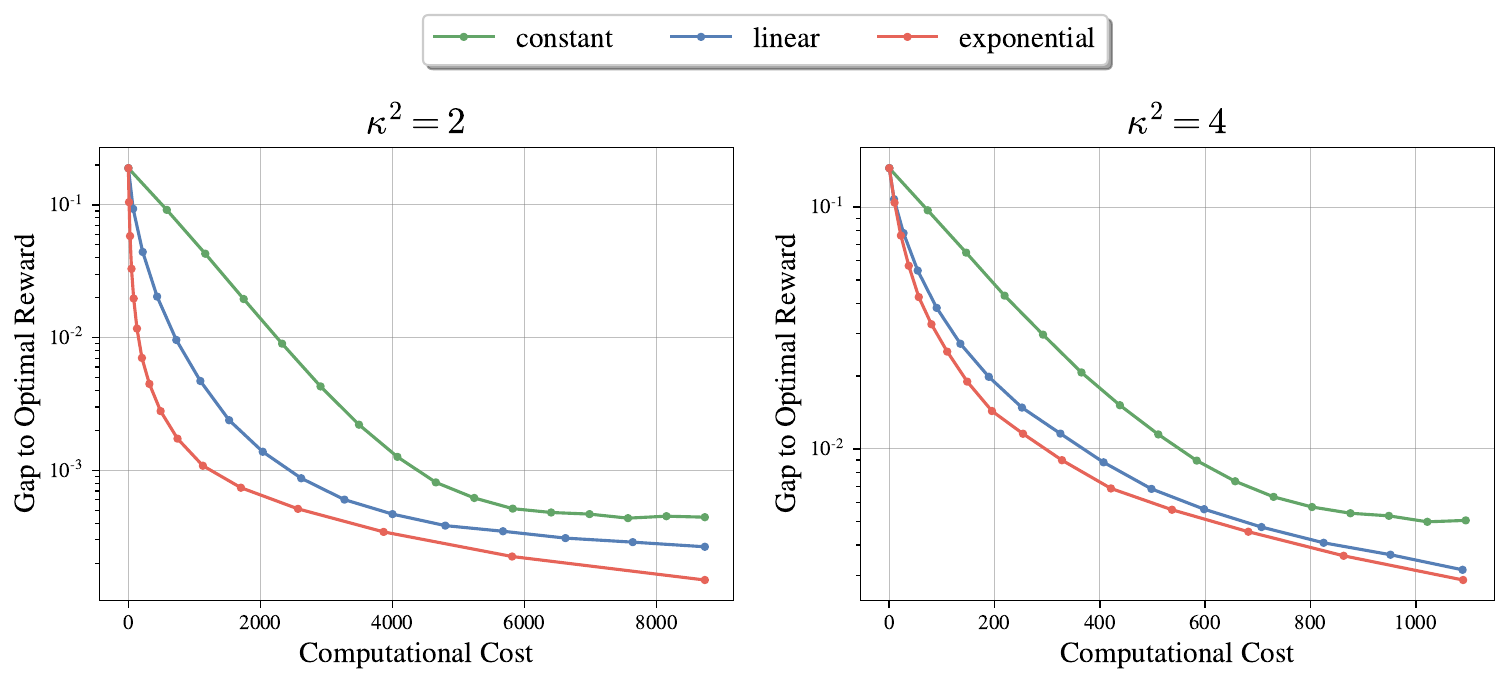}
        \vspace{-15pt}
    \caption{$n=10, T=15, \theta=(1,1)^\top$}
    \end{subfigure}
    \hfill
    \begin{subfigure}{.6\textwidth}
        \includegraphics[width=\linewidth]{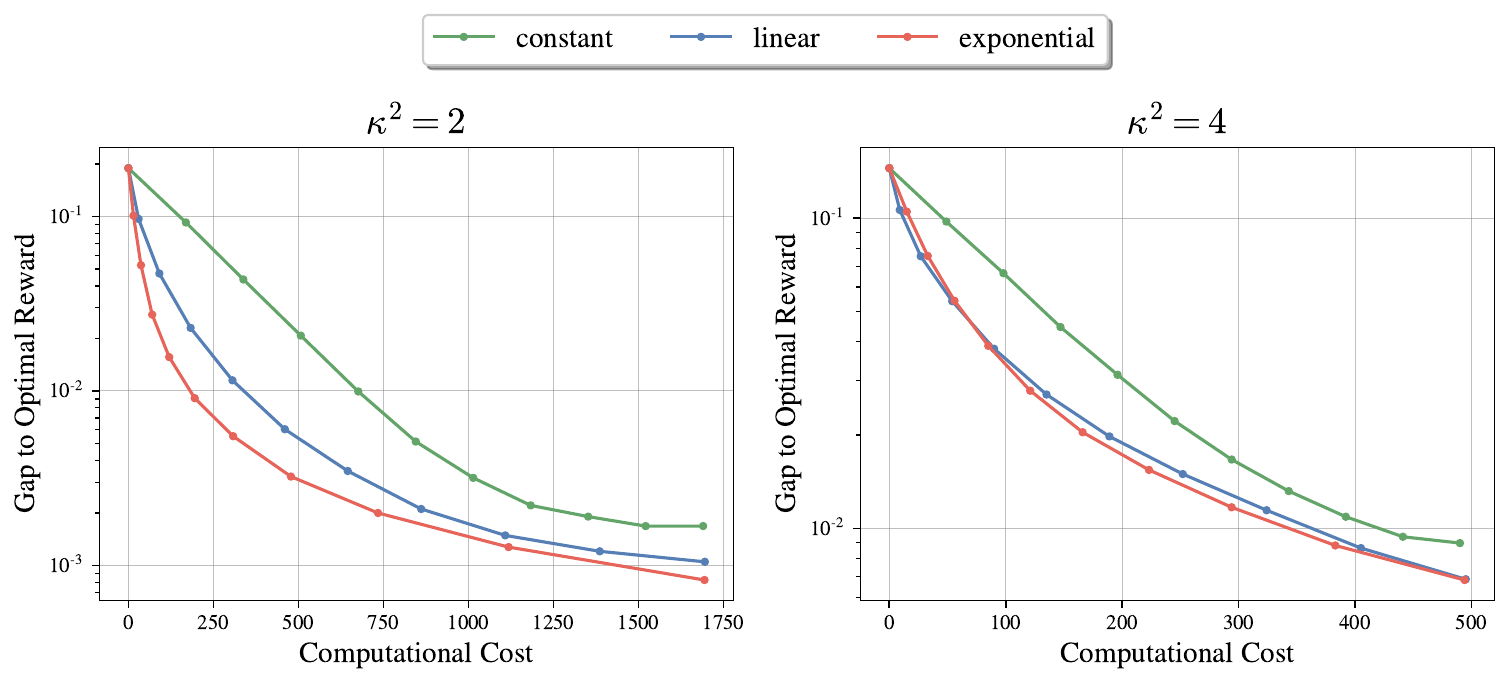}
        \vspace{-15pt}
    \caption{$n=15, T=10, \theta=(1,1)^\top$}
    \end{subfigure}
    \hfill
    \begin{subfigure}{.6\textwidth}
        \includegraphics[width=\linewidth]{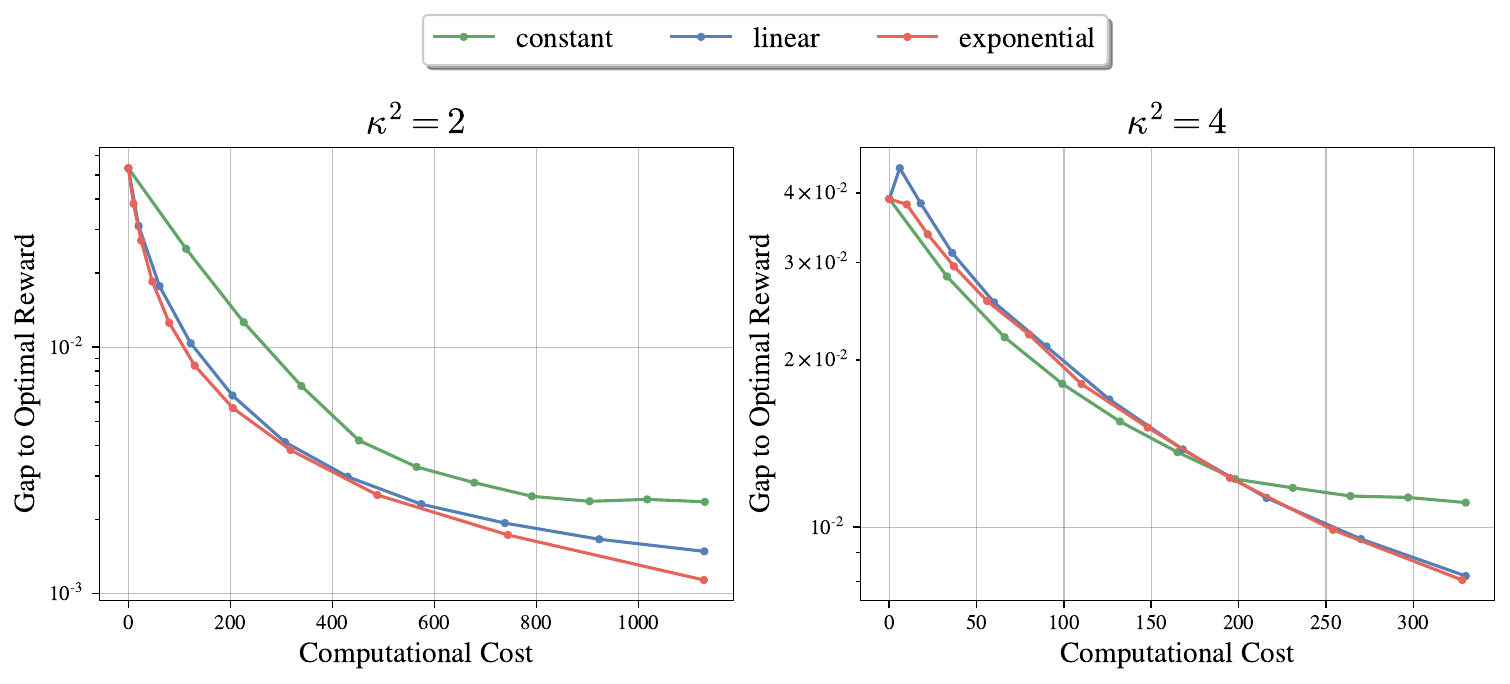}
        \vspace{-15pt}
    \caption{$n=10, T=10, \theta=(0.5,0.5)^\top$}
    \end{subfigure}
    \hfill
    \begin{subfigure}{.6\textwidth}
        \includegraphics[width=\linewidth]{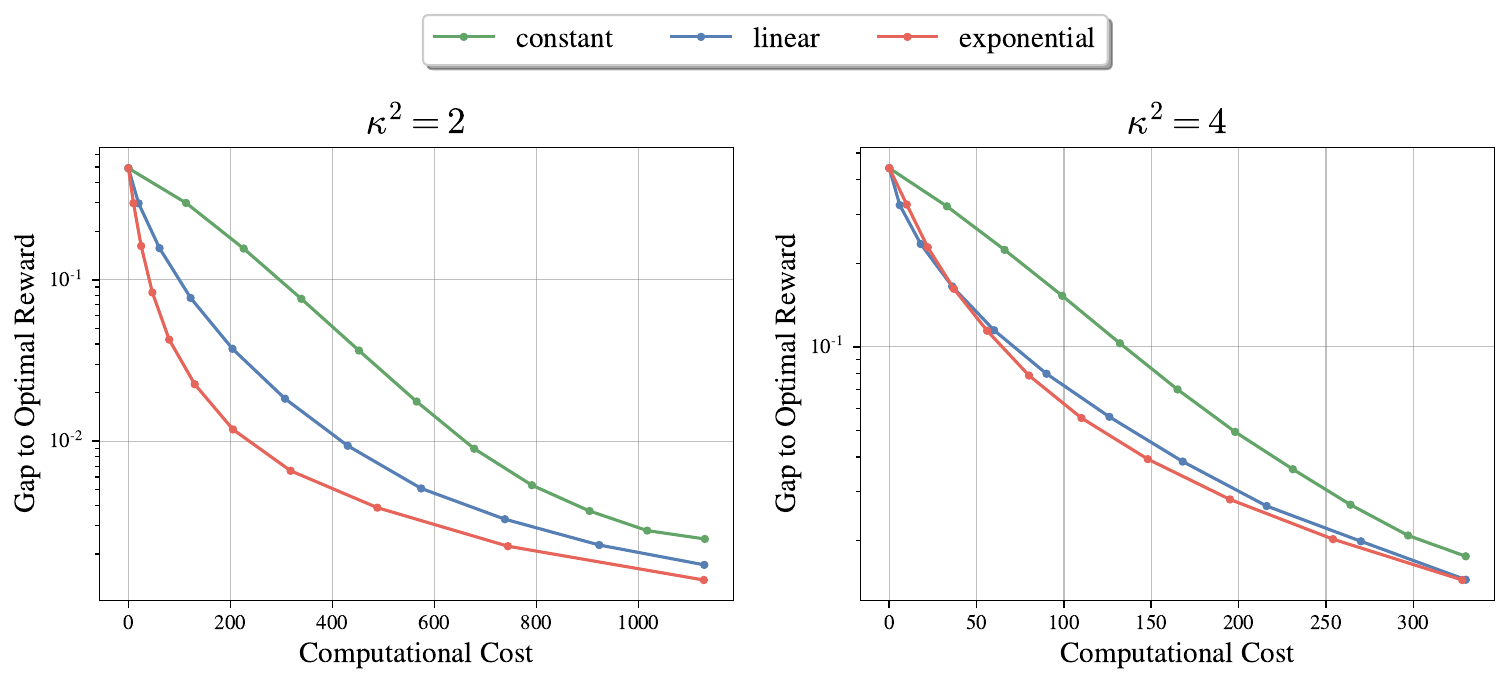}
        \vspace{-15pt}
    \caption{$n=10, T=10, \theta=(2,2)^\top$}
    \end{subfigure}
    \caption{Additional experimental results of the toy example with diverse parameters.}
    \label{appen-fig:toy}
\end{figure}

\subsection{Ablation on Exponential Growth Schemes in Image Denoising}\label{app:ablation_exp_image}
We conduct an ablation study for exponential growth policies to understand the impact of the $n_t$ on the performance of the iterative learning framework, as measured by PSNR in \cref{fig:denoising-ablation}. 
The results reveal that a smaller $n_t$ during the early stages of training leads to a rapid increase in PSNR and earlier convergence. In contrast, a larger $n_t$ results in a slower initial increase in PSNR but achieves a higher final PSNR when trained long enough. These findings underscore the trade-off between rapid early improvements and the potential for superior overall performance with prolonged training.

\begin{figure}[t]
    \centering
    \includegraphics[width=0.7\linewidth]{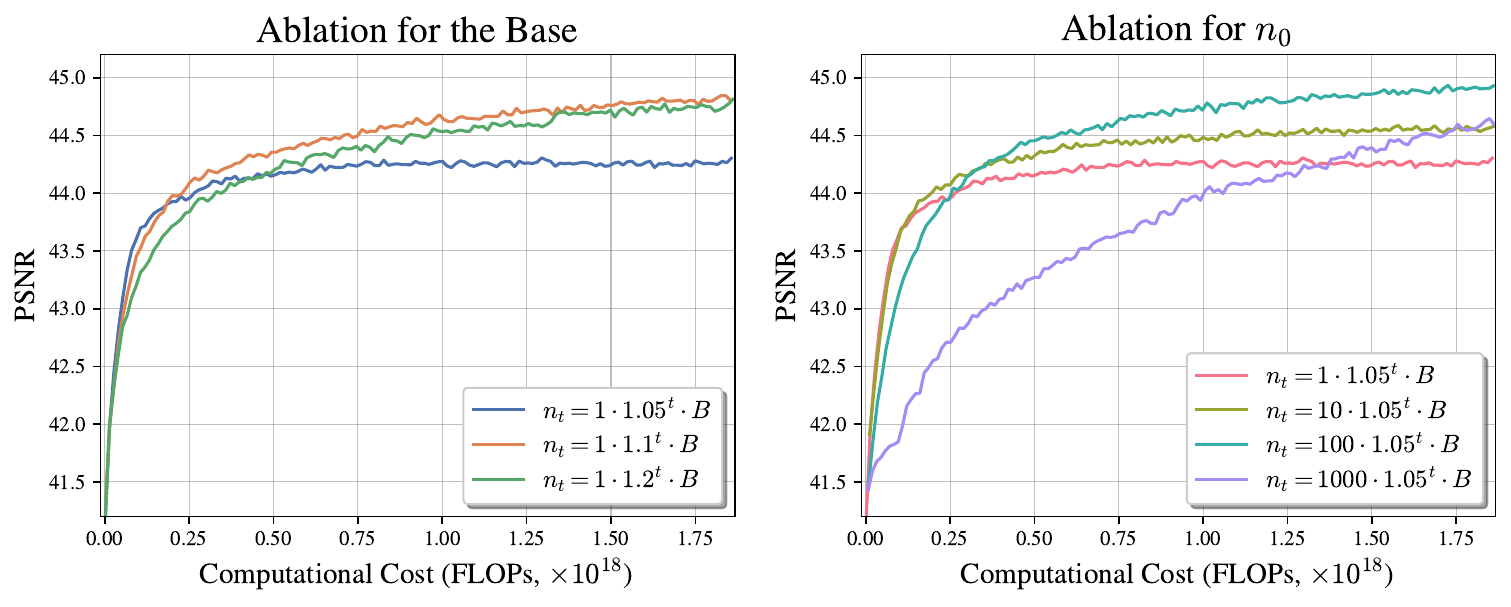}
    \caption{Ablation study for the image denoising task. We show the performance of exponential scheme under the $s=10$ setting with different parameters.}
    \label{fig:denoising-ablation}
\end{figure}

\subsection{Ablation on Exponential Growth Schemes in Math Reasoning}\label{app:ablation_exp_math}

We further explore the effects of varying the parameter $n_t$ within exponential growth policies on the performance of mathematical reasoning tasks. As shown in \cref{fig:appen-fig:math}, configurations with higher $n_t$ demonstrate the potential for superior performance, improving the model's reasoning capabilities and ability to solve mathematical problems. However, these configurations also exhibit some instability, characterized by abrupt changes in accuracy as $n_t$ increases. This highlights the need to find a balance between achieving rapid learning and mitigating the risk of unstable training dynamics. Based on these findings, we adopt $n_t = 10 \cdot 2^t \cdot B$ in the main text. 

\begin{figure}[t]
    \centering
    \begin{subfigure}{\textwidth}
        \includegraphics[width=\linewidth]{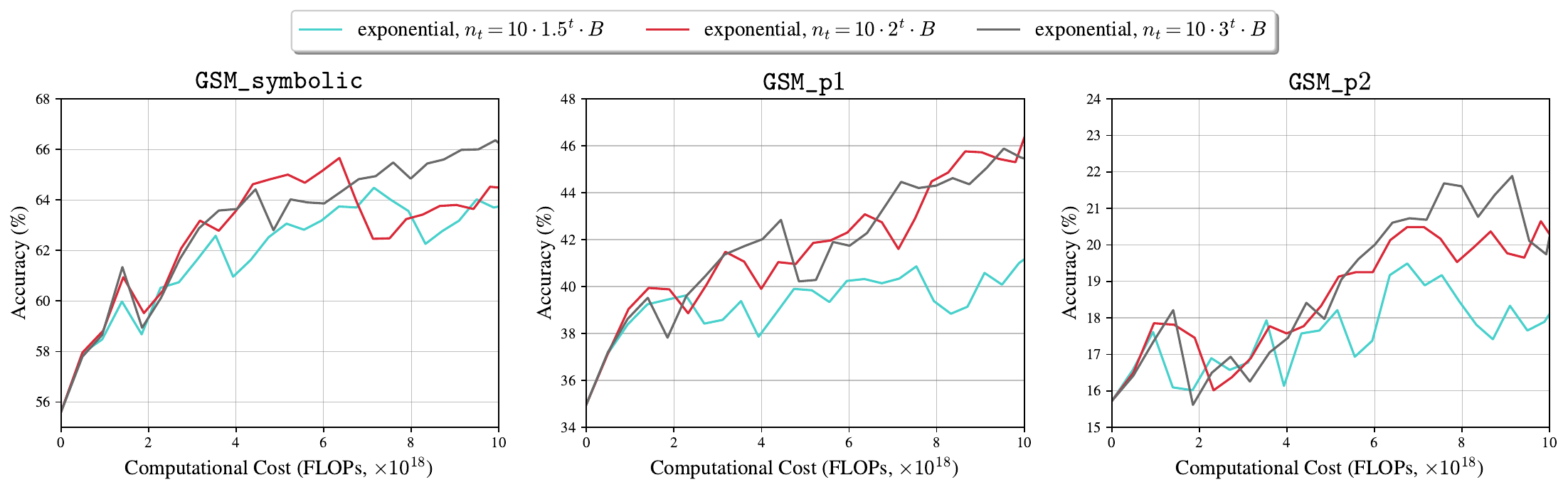}
    \end{subfigure}
    \begin{subfigure}{\textwidth}
        \includegraphics[width=\linewidth]{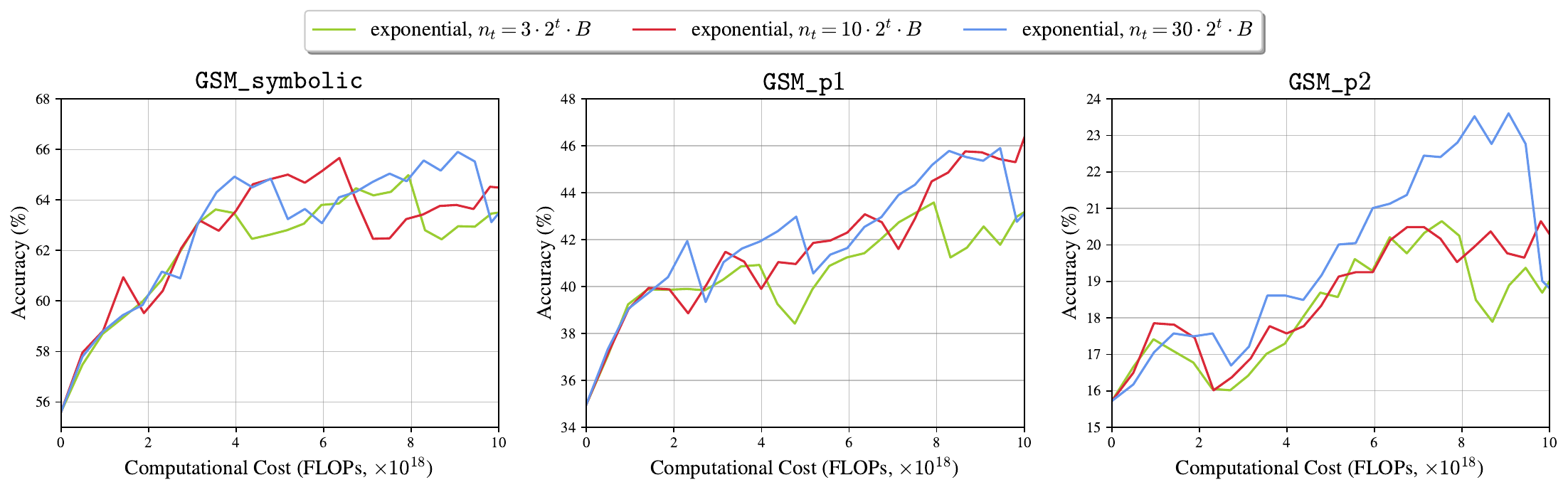}
    \end{subfigure}
    \caption{Ablation study for the math reasoning task. We show the performance of exponential scheme with different parameters. }
    \label{fig:appen-fig:math}
\end{figure}

\subsection{Additional Empirical Results in Math Reasoning}
\label{app:temp=0.7_math}

In the main paper, we use a temperature of 0.3 for the math reasoning experiment shown in \cref{fig:exp-math}. Here, we present additional empirical results with a temperature of 0.7 in \cref{fig:math_temp=0.7}. The general trends across different curves and policy classes remain consistent when comparing these two figures. This ablation study further demonstrates the superiority of the proposed approach.

\begin{figure}[t]
    \centering
    \includegraphics[width=\linewidth]{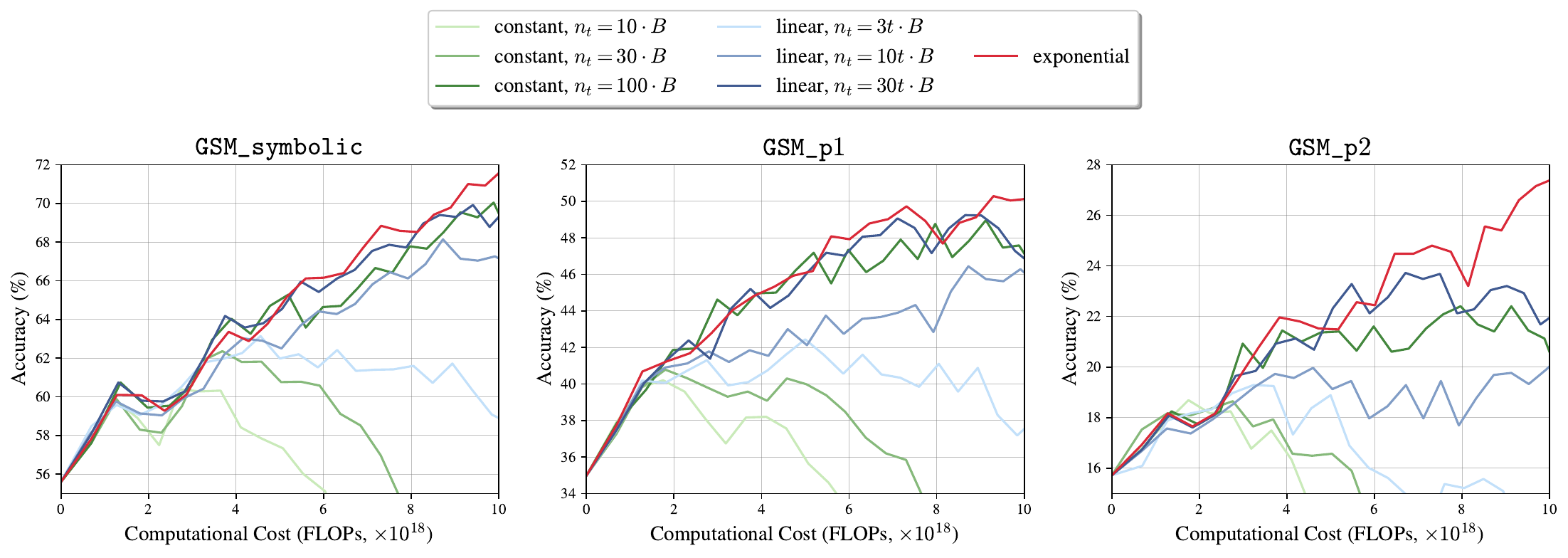}
    \caption{Additional empirical results of math reasoning where the temperature is 0.7. Other hyperparameters remain the same them in \cref{fig:exp-math}, i.e., the training batch size is still set to $B=256$ and the exponential policy is still $n_t = 10 \cdot 2^t \cdot B$. }
    \label{fig:math_temp=0.7}
\end{figure}

\subsection{Mixed Policies}
\begin{table}[t]
    \centering
    \caption{Additional empirical results of mixed policies in the math reasoning task. For the mixed policies, we use a linear policy initially and switching to an exponential policy when the linear phase ceased to grow. The results show that mixed policies outperform linear policies and are comparable to the exponential policies. }
    \label{tab:mix-policy}
    \vskip 0.1in
    \begin{small}
    \begin{sc}
    \begin{tabular}{l|cccc}
    \toprule
        \textbf{Policy} & \texttt{symbolic} & \texttt{p1} & \texttt{p2} \\ 
        \midrule
        linear & 64.24 & 40.84 & 19.17 \\
        mixed & \textbf{65.98} & 45.56 & 20.16 \\
        exponential & 65.66 & \textbf{47.26} & \textbf{20.65} \\
    \bottomrule
    \end{tabular}
    \end{sc}
    \end{small}
\end{table}

We also explore mixed policies empirically. Specifically, we use a linear policy initially, switching to an exponential policy when the linear phase ceases to yield growth. We present the results in \cref{tab:mix-policy}, where we directly combine the linear and exponential schemes. The results show that mixed policies do not outperform the pure exponential policy. While this approach could be seen as a more general strategy, the increased degrees of freedom in parameter tuning make hyperparameter selection considerably more challenging. We leave further exploration of this direction to future work.

\section{Discussion}

\subsection{Impact Statement}
\label{appen-sec:impact}
This research advances the field of machine learning by addressing the strategic allocation of computational resources in the post-training phase using synthetic data. We introduce a novel theoretical framework to optimize the distribution of a fixed budget across multiple training iterations, an approach that demonstrates significant improvement in model performance. Our findings challenge conventional constant allocation strategies, revealing that dynamically adjusted policies, particularly those with exponential growth, not only converge more stably but also do so at a reduced computational cost. This work has broad implications for the development of AI models, especially in domains where the generation of synthetic data is critical for enhancing model capabilities without the extensive need for expensive or hard-to-obtain real-world data. The practical applications of this research extend to fields such as autonomous driving, medical image analysis, and any other area where AI must perform with high stability and efficiency under constraints of limited labeled data. This study thus provides a foundation for more sustainable AI development practices that prioritize both performance and computational efficiency.

\subsection{Connection with Model Collapse}

Prior work has introduced verification \citep{feng2024beyond, elmaximizing} and correction \citep{gillmanself} mechanisms for synthetic data to mitigate model collapse. They have theoretically derived one-step conditions under which collapse is avoided or triggered. Our methodology crucially relies on the reward model-based data verification process, using reward signals to filter data to prevent model collapse.

Empirically, we observe that increasing policies are generally more resistant to collapse. For constant policies, even with reward-based curation, the model's performance initially improves but then degrades over successive iterations. Increasing polices yield smoother, generally upward trajectories, though some decline under very extensive finetuning. These observations extend beyond the scope of the analyses in \citep{feng2024beyond,elmaximizing} and suggest the need for a deeper theoretical analysis. We will add this discussion on model collapse to the paper and leave it for future work.

\subsection{Limitation}
\label{appen-sec:limitation}

In this paper, we focus on achieving efficient SFT improvement within an iterative learning framework. Recent trends \citep{guo2025deepseek} incorporate reinforcement learning into the entire post-training phase, while SFT remains an important component for enabling the model to exhibit certain behaviors and skills necessary for successful RL training later \citep{gandhi2025cognitive}. We aim to extend the theory to develop an efficient training scheme for RLHF and RL from verifiable rewards.

We also examine learning with access to a reward model with (\cref{appen-subsec:noisy_reward}) or without independent random noises. An empirical reward, such as using an LLM as a judge, could produce biased results and lead to reward hacking. We leave the exploration of biased rewards to future work.

We conduct two experiments to validate our theoretical results. However, incorporating the SFT scheme design into production scale is beyond our resources at academic institutions. We present theoretical results and experimental findings across modalities that support the generalizability of our results.

\end{document}